\newtheorem{theorem}{Theorem}
\newtheorem{remark}{Remark}
\title{Smoothed Embeddings for Certified Few-Shot Learning}
\author{
  Mikhail Pautov \\
  Skolkovo Institute of Science and Technology\\
  \And
  Olesya Kuznetsova\thanks{The code  for this paper is available at \href{https://github.com/koava36/certrob-fewshot}{our repository}.} \\
  Skolkovo Institute of Science and Technology\\
  \And
  Nurislam Tursynbek \\
  University of North Carolina at Chapel Hill\\
  \And
  Aleksandr Petiushko \\
  Moscow State University, Nuro, Inc. \\
  \And
  Ivan Oseledets \\
  Skolkovo Institute of Science and Technology, AIRI \\
}
\begin{document}

\maketitle

\begin{abstract}
   Randomized smoothing is considered to be the state-of-the-art provable defense against adversarial perturbations. However, it heavily exploits the fact that classifiers map input objects to class probabilities and do not focus on the ones that learn a metric space in which classification is performed by computing distances to embeddings of classes prototypes. In this work, we extend randomized smoothing to few-shot learning models that map inputs to normalized embeddings. We provide analysis of Lipschitz continuity of such models and  derive robustness certificate against $\ell_2$-bounded perturbations that may be useful in few-shot learning scenarios. Our theoretical results are confirmed by experiments on different datasets.
\end{abstract}

\section{Introduction}
\label{sec:intro}
Few-shot learning is a setting in which a classification model is evaluated on the classes not seen during the training phase. Nowadays quite a few few-shot learning approaches based on neural networks are known. Unfortunately, neural network based classifiers are intimately vulnerable to adversarial perturbations \cite{szegedy2013intriguing,goodfellow2014explaining} -- accurately crafted small modifications of the input that may significantly alter the model's prediction. This vulnerability poses a restriction on the deployment of such approaches in safety-critical scenarios, so the research interest in the field of attacks on neural networks has been great in  recent years. 

Several works studied this phenomena in different practical applications -- image classification \cite{carlini2017towards,moosavi2016deepfool,moosavi2017universal,su2019one}, object detection \cite{kaziakhmedov2019real, li2021universal, wu2020making, xie2017adversarial}, face recognition \cite{komkov2021advhat, dong2019efficient, zhong2020towards}, semantic segmentation \cite{fischer2017adversarial, hendrik2017universal, xie2017adversarial}. These studies show that it is easy to force a model to behave in the desired way by applying imperceptible change to its input. As a result, defenses, both empirical \cite{dhillon2018stochastic, zhou2020manifold, jang2019adversarial} and provable \cite{yang2020randomized, lecuyer2019certified, cohen2019certified, wong2018provable, zhang2020black, jia2019certified, weng2019proven, pautov2021cc}, were proposed recently. Although empirical ones can be (and often are) broken by more powerful attacks, the provable ones are of a big interest since they make it possible to provide \emph{guarantees} of the correctness of the work of a model under certain assumptions, and, thus, possibly broaden the scope of tasks which may be trusted to the neural networks.

Randomized smoothing \cite{lecuyer2019certified, cohen2019certified, li2018certified} is the state-of-the-art approach used for constructing classification models provably robust against small-norm additive adversarial perturbations. This approach is scalable to large datasets and can be applied to any classifier since it does not use any assumptions about model's architecture. Generally, the idea is following. Suppose, we are given a base neural network classier $f: \mathbb{R}^D \to \left[0, 1\right]^K$ that maps an input image $x$ to a fixed number of $K$ class probabilities. Its smoothed version with the standard Gaussian distribution is: \begin{equation}\label{eq:rand_smooth}
    g(x) = \underset{\varepsilon \sim \mathcal{N}(0, \Sigma)}{\mathbb{E}} f(x+\varepsilon).
\end{equation}

As shown in \cite{cohen2019certified}, the new (smoothed) classifier is provably robust at $x$ to $\ell_2$-bounded perturbations if the base classifier $f$ is confident enough at $x$. However, the proof of certification heavily exploits the fact that classifiers are restricted to map an input to a fixed number of class probabilities. Thus, directly applying randomized smoothing to classifiers in metric space, such as in few-shot learning, is a challenging task.

There are several works that aim at improving the robustness for few-shot classification \cite{kumar2021center, goldblum2020adversarially, zhang2019theoretically, liu2021long}. However, the focus in such works is either on the improvement of empirical robustness or probabilistic guarantees of certified robustness; none of them provide theoretical guarantees on the worst-case model behavior.

\begin{figure*}
\centering
\scalebox{0.9}{
\begin{minipage}{\linewidth}
\begin{picture}(0,120)
\put(0,0){\includegraphics[trim=0 70 0 70,clip,width=\linewidth]{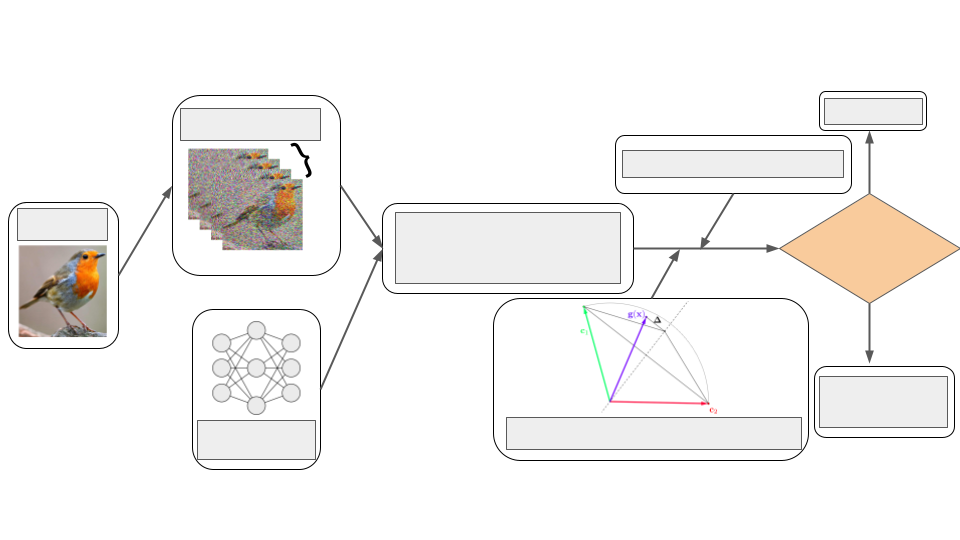}}


\put(130,128){\large{$n$}}
\put(10,98){Image \large{$x$}}
\put(75,140){Noisy images}
\put(83,8){Model \large{$f(\cdot)$}}
\put(175,84){{\large{$\frac1n \sum\frac{f(x+\varepsilon)}{\|f(x+\varepsilon)\|_2}$}}}
\put(165,98){Smoothed embeddings}
\put(210,12){Adversarial embedding risk \large{$\gamma$}}
\put(263,123){Lipschitz constant \large{$L$}}
\put(334, 88){\large{$\|\delta\|_2 <\frac{\gamma}{L}?$}}
\put(363, 120){Yes}
\put(363, 50){No}
\put(339, 30){Abstain from}
\put(339, 20){classification}
\put(346, 145){Certify}
\end{picture}
\end{minipage}
}
\caption{Illustration of certification pipeline for a single image $x$. Given a model $f(\cdot)$ and $n$ realisations of zero-mean Gaussian noise $\varepsilon_1, \dots, \varepsilon_n   \sim \mathcal{N}(0, \sigma^2I)$, an estimate $\hat{g}(x) = \frac{1}{n}\sum_{i=1}^n f(x+\varepsilon_i)$ of $g(x) = \mathbb{E}_{\varepsilon \sim \mathcal{N}(0, \sigma^2I)} f(x+\varepsilon)$  is computed. Note that $g(x)$ is $L$-Lipschitz with $L=\sqrt{{2}/{\pi \sigma^2}},$ according to the Theorem  \ref{th:vector-lipschitz}. The number of samples $n$ is increased until adversarial embedding risk $\gamma$ from the Theorem \ref{th:distance_to_boundary} is computed with Algorithms  \ref{alg:closest_prot}-\ref{alg:distance_alpha} and certified radius $r=\frac{\gamma}{L}$ is determined. The model $g(\cdot)$ is treated as certified at $x$ for all additive perturbations $\delta: \|\delta\|_2 < r$.}
\label{teaser}
\end{figure*}

In this work, we fill this gap, by generalizing and theoretically justifying the idea of randomized smoothing to few-shot learning. In this scenario, provable certification needs to be obtained not in the space of output class probabilities, but in the space of descriptive embeddings. This work is the first, to our knowledge, where the theoretical robustness guarantees for few-shot scenario is provided.

\textbf{Our contributions are summarized as follows:}

\begin{itemize}
    \item We provide the first theoretical robustness guarantee for few-shot learning classification. 
    \item Analysis of Lipschitz continuity of such models and providing the robustness certificates against $\ell_2-$bounded perturbations for few-shot learning scenarios. 
    \item We propose to estimate confidence intervals not for distances between the approximation of smoothed embedding and class prototype but for the dot product of vectors which has expectation equal to the distance between actual smoothed embedding and class prototype.
\end{itemize}

\section{Problem statement}
\label{sec:problem}
\subsection{Notation}
We consider a few-shot classification problem where we are given a set of labeled objects $(x_1, y_1), \dots, (x_m, y_m)$ where $x_i \in \mathbb{R}^D$  and $y_i \in \{1, \dots, K\}$ are corresponding labels. We follow the notation from \cite{snell2017prototypical} and denote $S_k$ as the set of objects of class $k.$ 

\subsection{Few-shot learning classification}
Suppose we have a function $f: \mathbb{R}^D \to \mathbb{R}^d$ that maps input objects to the space of normalized embeddings. Then, $d-$dimensional prototypes of classes are computed as follows (expression is given for the prototype of class $k$):

\begin{equation}\label{eq:class_prototypes}
    c_k = \frac{1}{|S_k|}\sum_{x \in S_k}f(x).
\end{equation}

In order to classify a sample,  one should compute the distances between its embedding and class prototypes -- a sample is assigned to the class with the closest prototype. Namely, given a distance function $\rho: \mathbb{R}^d \times \mathbb{R}^d \to [0, +\infty)$, the class $c$ of the sample $x$ is computed as below:

\begin{equation}\label{eq:class_prediction}
    c = \underset{k \in \{1, \dots, K\}}{\arg\min} \rho\left(f(x), c_k\right).
\end{equation}

Given an embedding function $f$, our goal is to construct a classifier $g$ provably robust to additive perturbations $\Delta$ of a small norm. In other words, we want to find a norm threshold $t$ such that equality 

\begin{equation}\label{eq:class_robust}
    \underset{k \in \{1, \dots, K\}}{\arg\min} \rho\left(g(x), c_k\right) =  \underset{k \in \{1, \dots, K\}}{\arg\min} \rho\left(g(x+\delta), c_k\right),
\end{equation}
will be satisfied for all $\delta: \|\delta\|_2 \le t.$ 

In this paper, the solution of a problem of constructing a classifier that satisfies the condition in \eqref{eq:class_robust} is approached by extending the analysis of the robustness of  smoothed classifiers described in \eqref{eq:rand_smooth}  to the case of vector functions. The choice of the distance metric in \eqref{eq:class_robust} is motivated by an analysis of Lipschitz-continuity given in the next section.

\section{Randomized smoothing}
\subsection{Background}
Randomized smoothing \cite{lecuyer2019certified,cohen2019certified} is described as a technique of convolving a base classifier $f$ with an isotropic Gaussian noise such that the new classifier $g(x)$ returns the most probable prediction of $f$ of a random variable $\xi \sim \mathcal{N}(x, \sigma^2I)$, where the choice of Gaussian distribution is motivated by the restriction on $g$ to be robust against additive perturbations of bounded norm. In this case, given a classifier $f: \mathbb{R}^D \to [0,1]$ 
and smoothing distribution $\mathcal{N}(0, \sigma^2I)$, the classifier $g$ looks as follows:

\begin{equation}\label{eq:robust_g_1d}
    g(x) = \frac{1}{(2\pi\sigma^2)^\frac{n}{2}}\int_{\mathbb{R}^D}f(x + \varepsilon) \exp\left(-\frac{\|\varepsilon\|^2_2}{2\sigma^2}\right)d\varepsilon.
\end{equation}

One can show by Stein's Lemma that given the fact that the function $f$ in \eqref{eq:robust_g_1d} is bounded (namely, $\forall x \in D(f), \vert f(x) \vert \le 1$), then the function $g$ is $L-$Lipschitz: 

\begin{equation}\label{eq:lipschitz_g_1d}
    \forall x, x' \in D(g), \ \|g(x)-g(x')\|_2 \le L \|x-x'\|_2, \end{equation}
    with $L=\sqrt{\frac{2}{\pi \sigma^2}}$, what immediately produce theoretical robustness guarantee on $g.$

Although this approach is simple and effective, it has a serious drawback: in practice, it is impossible to compute the expectation in \eqref{eq:robust_g_1d} exactly and, thus, impossible to compute the prediction of the smoothed function $g$ at any point. Instead the integral is computed with the use of Monte-Carlo approximation with $n$ samples to obtain the prediction with an arbitrary level of confidence. Notably,  to achieve an appropriate accuracy of the Monte-Carlo approximation, the number of samples $n$ should be large enough that may dramatically affect inference time. 

In this work, we generalize the analysis of Lipschitz-continuity to the case of vector functions $g: \mathbb{R}^D \to \mathbb{R}^d$ and provide robustness guarantees for classification performed in the space of embeddings.  The certification pipeline is illustrated in the Figure \ref{teaser}.

\subsection{Randomized smoothing for vector functions}

\paragraph{Lipschitz-continuity of vector function.} In the work of \cite{salman2019provably}, the robustness guarantee from  \cite{cohen2019certified} is proved by estimating the Lipschitz constant of a smoothed classifier. Unfortunately, a straightforward generalization of this approach to the case of vector functions leads to the estimation of the expectation of the norm of a multivariate Gaussian which is known to depend on the number of  dimensions of the space. Instead, we show that a simple adjustment to this technique may be done such that the estimate of the Lipschitz constant is the same as for the function in \eqref{eq:robust_g_1d}. Our results are formulated in the theorems below proofs of which are moved to the Appendix in order not to distract the reader.

\begin{theorem}(Lipschitz-continuity of smoothed vector function)\label{th:vector-lipschitz}
Suppose that $f: \mathbb{R}^D \to \mathbb{R}^d$ is a deterministic function and  $g(x) = {\mathbb{E}_{\varepsilon \sim \mathcal{N}(0, \sigma^2 I)}} f(x+\varepsilon)$ is continuously differentiable for all $x$. If for all $x$, $\|f(x)\|_2 = 1$, then $g(x)$ is $L-$Lipschitz in $l_2-$norm with $L = \sqrt{\frac{2}{\pi \sigma^2}}.$  
\end{theorem}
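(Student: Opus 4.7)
The plan is to reduce the vector-valued Lipschitz statement to the scalar one via a duality / projection argument, thereby avoiding any explicit appearance of $\mathbb{E}\|\varepsilon\|_2$ (which would be the source of an unwanted dimension factor $\sqrt{D}$).

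First I would rewrite the quantity to be bounded using the dual representation of the $\ell_2$-norm: for any $x, x' \in \mathbb{R}^D$,
\begin{equation*}
\|g(x) - g(x')\|_2 \;=\; \sup_{u \in \mathbb{R}^d,\; \|u\|_2 = 1}\; u^{\top}\bigl(g(x) - g(x')\bigr).
\end{equation*}
Since the Gaussian expectation is linear, the scalar function $h_u(x) := u^{\top} g(x) = \mathbb{E}_{\varepsilon \sim \mathcal{N}(0, \sigma^2 I)}\bigl[u^{\top} f(x+\varepsilon)\bigr]$ is itself a Gaussian smoothing of the scalar-valued map $x \mapsto u^{\top} f(x)$. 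The crucial observation is that, by Cauchy--Schwarz, the hypothesis $\|f(x)\|_2 = 1$ gives $|u^{\top} f(x)| \le \|u\|_2 \|f(x)\|_2 = 1$ for every unit vector $u$, uniformly in $x$. So $h_u$ is a Gaussian convolution of a $[-1,1]$-bounded scalar function, exactly the setting where the one-dimensional Stein-based bound applies.

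Next I would apply Stein's lemma to $h_u$ (justified because $g$ is assumed continuously differentiable, and for Gaussian smoothings differentiation can be passed under the integral) to obtain
\begin{equation*}
\nabla h_u(x) \;=\; \frac{1}{\sigma^{2}}\,\mathbb{E}_{\varepsilon \sim \mathcal{N}(0, \sigma^{2} I)}\!\bigl[\varepsilon\, u^{\top} f(x+\varepsilon)\bigr].
\end{equation*}
For any unit direction $v \in \mathbb{R}^D$, using $|u^{\top} f(x+\varepsilon)| \le 1$ and the fact that $v^{\top}\varepsilon \sim \mathcal{N}(0, \sigma^{2})$, Jensen/absolute-value give
\begin{equation*}
|v^{\top}\nabla h_u(x)| \;\le\; \frac{1}{\sigma^{2}}\, \mathbb{E}\,|v^{\top}\varepsilon| \;=\; \frac{1}{\sigma^{2}}\cdot \sigma\sqrt{\tfrac{2}{\pi}} \;=\; \sqrt{\tfrac{2}{\pi \sigma^{2}}}.
\end{equation*}
Taking the sup over $v$ shows $\|\nabla h_u(x)\|_2 \le L := \sqrt{2/(\pi \sigma^{2})}$ for every unit $u$ and every $x$.

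Finally I would integrate along the segment joining $x$ and $x'$: by the mean-value inequality $|h_u(x) - h_u(x')| \le L\,\|x - x'\|_2$, and then take $\sup_{\|u\|_2 = 1}$ on both sides to obtain $\|g(x) - g(x')\|_2 \le L\,\|x - x'\|_2$, which is the claim. The only real subtlety, and the step where one must resist the naive generalization, is the very first one: using the $\|f\|_2 = 1$ hypothesis through a rank-one projection $u^{\top} f$ instead of bounding $\|\varepsilon\|_2$ directly. Everything else reduces to Stein's identity and the one-dimensional Gaussian absolute moment $\mathbb{E}|\mathcal{N}(0,\sigma^{2})| = \sigma\sqrt{2/\pi}$, so no dependence on $D$ or $d$ appears.
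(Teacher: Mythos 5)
Your proof is correct. It differs from the paper's in how the reduction to a one-dimensional Gaussian moment is organized. The paper bounds the spectral norm of the Jacobian directly: it writes $J_g$ as a Gaussian-weighted integral of $f(y)(x-y)^\top$, applies $\|\int \cdot\| \le \int \|\cdot\|$ together with $\|f\|_2=1$, and handles the factor $(x-y)^\top v$ by an explicit rotation $Q$ with $Q^\top v = e_1$ and a change of variables, so that only the first Gaussian coordinate survives and $\mathbb{E}|z_1| = \sigma\sqrt{2/\pi}$ gives the constant. You instead dualize the output norm first, reducing everything to the scalar smoothed functions $h_u = u^\top g$ with $|u^\top f|\le 1$ by Cauchy--Schwarz, and then run the standard scalar Stein argument, using that $v^\top\varepsilon \sim \mathcal{N}(0,\sigma^2)$ in place of the paper's rotation trick. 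The two arguments are equivalent in substance (both amount to bounding $u^\top J_g v$ uniformly over unit $u,v$ and both avoid any $\mathbb{E}\|\varepsilon\|_2$ and hence any $\sqrt{D}$ or $\sqrt{d}$ factor), but your packaging is somewhat cleaner: the duality step makes the reduction to the already-known one-dimensional bound explicit and dispenses with the change-of-variables computation, at the mild cost of an extra supremum over $u$ at the end, which you close correctly via the mean-value inequality.
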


\begin{remark}
We perform the analysis of Lipschitz-continuity in the Theorem  \ref{th:vector-lipschitz} in $l_2-$norm, so the distance metric in \eqref{eq:class_robust} is $l_2-$distance. We do not consider other norms in this paper.
\end{remark}

\paragraph{Robust classification in the space of embeddings.} To provide certification for a classification in the space of embeddings, one should estimate the maximum deviation of the classified embedding that does not change the closest class prototype. In the Theorem \ref{th:distance_to_boundary}, we show how this deviation is connected with the mutual arrangement of embedding and class prototype.
\begin{wrapfigure}{r}{0.5\textwidth}
\centering
\vspace{-2.5cm}
\begin{minipage}{\linewidth}
\begin{picture}(160,200)
\put(40,0){\includegraphics[trim=0 0 0 0,clip,width=0.75\linewidth]{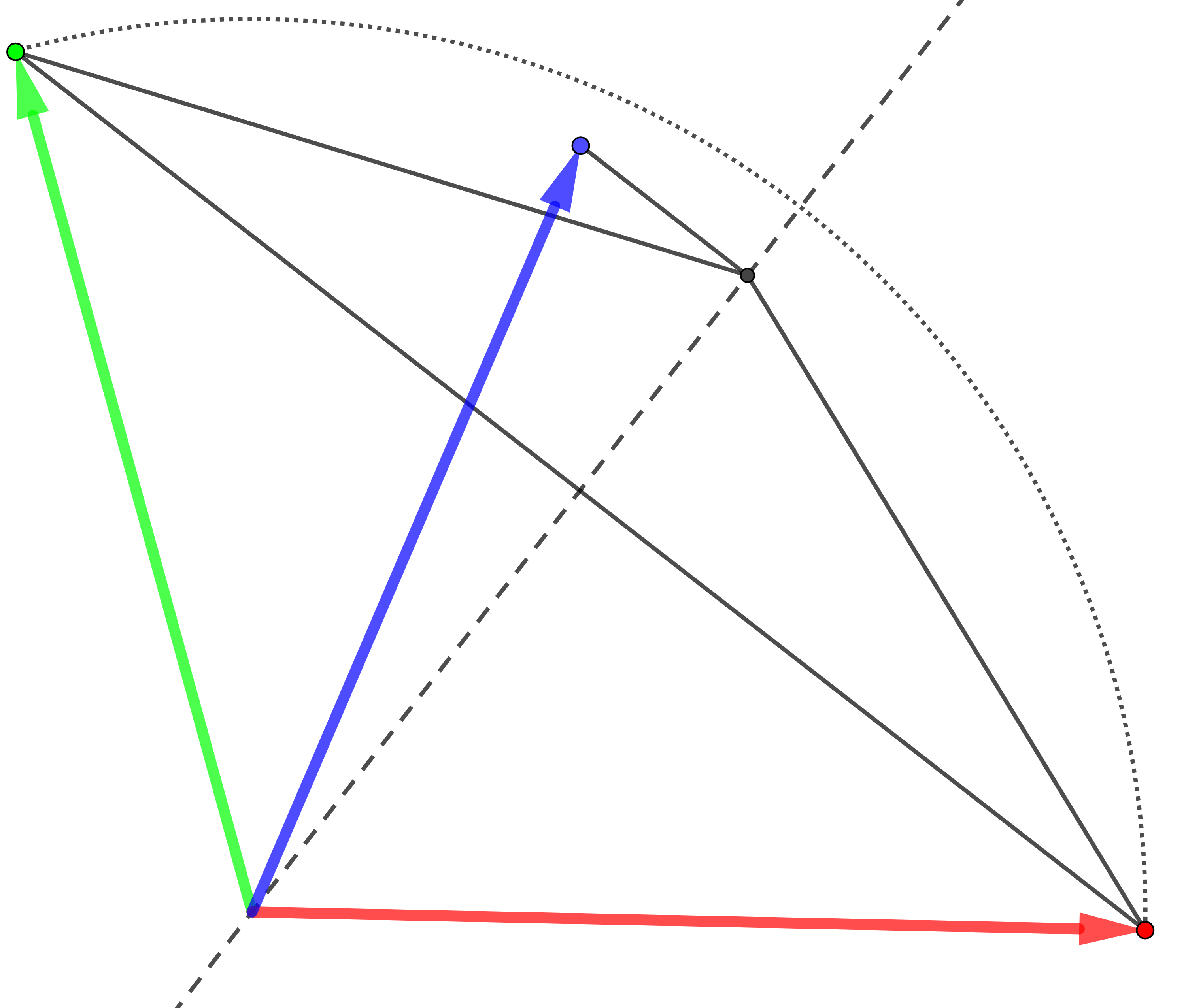}}
\put(38,90){\textcolor{green}{$\mathbf{c}_1$}}
\put(185,0){\textcolor{red}{$\mathbf{c}_2$}}
\put(92,109){\textcolor{blue}{$\mathbf{g(x)}$}}
\put(121,102){\textcolor{black}{$\mathbf{\Delta}$}}
\end{picture}
\end{minipage}
\caption{Illustration of  the Theorem \ref{th:distance_to_boundary}, one-shot case. The direction of adversarial risk in the space of embeddings is always parallel to the vector $c_1-c_2$. This is also true for the case of $\|c_1\|_2 \ne \|c_2\|_2$.}
\vspace{-1cm}
\label{geom}
\end{wrapfigure}

\begin{theorem}\label{th:distance_to_boundary}(Adversarial embedding risk) Given an input image $x \in \mathbb{R}^D $ and the embedding $g: \mathbb{R}^D \to \mathbb{R}^d$  the closest point on to decision boundary in the embedding space (see Figure 2) is located at a distance (defined as adversarial embedding risk):
\begin{equation}
    \gamma = \|\Delta\|_2=\frac{\|c_2-g(x)\|^2_2 - \|c_1-g(x)\|^2_2}{2\|c_2-c_1\|^2_2},
\end{equation}
where $c_1\in\mathbb{R}^d$ and $c_2\in\mathbb{R}^d$ are the two closest prototypes. The value of $\gamma$ is the distance between classifying embedding and the decision boundary between classes represented by $c_1$ and $c_2.$ Note that this is the  minimum $l_2-$distortion in the embedding space required to change the prediction of $g.$
\end{theorem}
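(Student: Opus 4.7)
The plan is to interpret $\gamma$ as the minimum $\ell_2$ translation that moves $g(x)$ from the decision cell of its closest prototype $c_1$ to the decision cell of the second-closest prototype $c_2$, and then reduce this to the elementary problem of computing the distance from a point to a hyperplane.

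First, I would characterize the decision boundary. Because classification is by nearest prototype under $\ell_2$ distance, the boundary between the classes represented by $c_1$ and $c_2$ is the locus of points equidistant from the two prototypes,
\begin{equation*}
H = \{y \in \mathbb{R}^d : \|y - c_1\|_2 = \|y - c_2\|_2\}.
\end{equation*}
Expanding the squared norms and cancelling the $\|y\|_2^2$ terms reduces this to the affine hyperplane
\begin{equation*}
H = \bigl\{y \in \mathbb{R}^d : 2\langle c_2 - c_1,\, y\rangle = \|c_2\|_2^2 - \|c_1\|_2^2\bigr\},
\end{equation*}
whose normal direction is $c_2 - c_1$ regardless of whether $\|c_1\|_2 = \|c_2\|_2$. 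This already justifies the geometric picture in Figure \ref{geom}: the minimum-norm perturbation $\Delta$ must point along $c_2 - c_1$, as it is the orthogonal projection of $g(x)$ onto $H$.

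Second, I would apply the standard point-to-hyperplane distance formula. For a hyperplane $\{y : a^\top y = b\}$ and a point $p$, the distance $|a^\top p - b|/\|a\|_2$ is attained by the orthogonal projection. Taking $a = 2(c_2 - c_1)$, $b = \|c_2\|_2^2 - \|c_1\|_2^2$, and $p = g(x)$ gives
\begin{equation*}
\|\Delta\|_2 = \frac{\bigl|\,2\langle c_2 - c_1,\, g(x)\rangle - (\|c_2\|_2^2 - \|c_1\|_2^2)\,\bigr|}{2\|c_2 - c_1\|_2}.
\end{equation*}
A direct expansion yields the identity $\|g(x) - c_2\|_2^2 - \|g(x) - c_1\|_2^2 = \|c_2\|_2^2 - \|c_1\|_2^2 - 2\langle c_2 - c_1, g(x)\rangle$, and because $c_1$ is by hypothesis the closest prototype, this quantity is non-negative, allowing the absolute value to be dropped. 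Substituting produces the formula stated for $\gamma$, and the optimizer is $\Delta = \gamma \cdot (c_2 - c_1)/\|c_2 - c_1\|_2$.

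Finally, I would verify that this is indeed the minimum $\ell_2$ distortion required to change the prediction, not just to reach the $(c_1, c_2)$-boundary. Any perturbed point $g(x) + \Delta'$ that changes the prediction must lie in some cell $\{y : \|y - c_k\|_2 \le \|y - c_j\|_2 \text{ for all } j\}$ with $k \neq 1$; in particular it lies on the far side of at least one boundary hyperplane $\{y : \|y - c_1\|_2 = \|y - c_k\|_2\}$. Since $c_2$ is the second-closest prototype, the boundary hyperplane between $c_1$ and $c_2$ is the nearest of all such boundary hyperplanes to $g(x)$, so the minimum is attained by the computation above. The main obstacle is essentially bookkeeping — the geometry is transparent; the only subtlety is this last step confirming that the $(c_1, c_2)$-boundary is the binding constraint.
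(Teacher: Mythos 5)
Your core computation follows essentially the same route as the paper's appendix proof: there, too, the minimal perturbation $\Delta$ is argued to be parallel to $c_2-c_1$ and is found by solving $\|g(x)+\Delta-c_1\|_2=\|g(x)+\Delta-c_2\|_2$, which is exactly the orthogonal projection onto the perpendicular bisector that you obtain via the point-to-hyperplane formula; your packaging is cleaner but it is not a different argument. One bookkeeping point you should not have absorbed silently: your derivation gives $\gamma=\frac{\|c_2-g(x)\|_2^2-\|c_1-g(x)\|_2^2}{2\|c_2-c_1\|_2}$, with the denominator to the first power, which is the dimensionally consistent distance and is also what the paper's own chain of equations yields from $2(c_2-c_1)^\top\Delta=\|g(x)-c_2\|_2^2-\|g(x)-c_1\|_2^2$ together with $\Delta=\gamma\frac{c_2-c_1}{\|c_2-c_1\|_2}$; the displayed theorem instead has $\|c_2-c_1\|_2^2$ in the denominator, so claiming your substitution ``produces the formula stated'' glosses over a mismatch that should have been flagged explicitly.

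The genuine gap is your final step. You assert that because $c_2$ is the second-closest prototype, the bisector between $c_1$ and $c_2$ is the nearest of all boundary hyperplanes, so that $\gamma$ is the minimum distortion needed to change the prediction. This is false in general: the distance from $g(x)$ to the bisector of $c_1$ and $c_k$ is $\frac{\|c_k-g(x)\|_2^2-\|c_1-g(x)\|_2^2}{2\|c_k-c_1\|_2}$, and the minimizer over $k$ need not be the prototype with the second-smallest distance because of the factor $\|c_k-c_1\|_2$ in the denominator. Concretely, in $\mathbb{R}^2$ take $g(x)=(0,0)$, $c_1=(0.5,0)$, $c_2=(0.6,0)$, $c_3=(-0.65,0)$ (all within the unit ball, hence consistent with prototypes built from normalized embeddings): $c_2$ is the second-closest prototype, but its bisector with $c_1$ lies at distance $0.55$ from $g(x)$, while the bisector with $c_3$ lies at distance $0.075$, so the prediction flips to class $3$ under a far smaller embedding perturbation than your $\gamma$. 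To be fair, the paper's proof does not address this either --- it only solves the two-prototype problem $\min\|\Delta\|_2$ subject to $\|g(x)+\Delta-c_2\|_2\le\|g(x)+\Delta-c_1\|_2$ --- so you attempted to prove more than the paper does, but the justification you give does not hold; a sound certificate of that form requires taking the minimum of the bisector distances over all competing prototypes $c_k$, $k\ne 1$, not just the second-closest one.
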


Two theorems combined result in an $l_2-$robustness guarantee for few-shot classification as:
\begin{theorem}\label{th:robustness_guarantee}(Robustness guarantee) $L_2$-robustness guarantee $r$ for an input image $x$ in the $n-$dimensional input metric space under classification by a classifier $g$ from  the Theorem \ref{th:vector-lipschitz} is 
$r = \frac{\gamma}{L},$
where $L$ is the Lipschitz constant from  the Theorem \ref{th:vector-lipschitz} and $\gamma$ is the adversarial risk from  the Theorem \ref{th:distance_to_boundary}. The value of  $r$ is the certified radius of $g$ at $x$, or, in other words, minimum $l_2-$distortion in the input space required to change the prediction of $g.$ The proof of this fact straightforwardly follows from the definition \eqref{eq:lipschitz_g_1d} and results from  Theorems \ref{th:vector-lipschitz}-\ref{th:distance_to_boundary}.
\end{theorem}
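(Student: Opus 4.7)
The plan is essentially a two-line chain argument that threads the Lipschitz bound from Theorem \ref{th:vector-lipschitz} through the embedding-space robustness radius from Theorem \ref{th:distance_to_boundary}. I would set things up as a contrapositive/direct argument on an arbitrary perturbation $\delta \in \mathbb{R}^D$ with $\|\delta\|_2 < r = \gamma/L$ and show that the predicted class at $x+\delta$ equals the predicted class at $x$.

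The first step is to invoke Lipschitz continuity of $g$. Since $f$ is normalized ($\|f(x)\|_2 = 1$), Theorem \ref{th:vector-lipschitz} gives
\begin{equation*}
\|g(x+\delta) - g(x)\|_2 \;\le\; L\,\|\delta\|_2 \;<\; L \cdot \frac{\gamma}{L} \;=\; \gamma.
\end{equation*}
This translates the input-space perturbation budget $\|\delta\|_2 < r$ into an embedding-space displacement bound strictly less than $\gamma$.

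The second step is to apply Theorem \ref{th:distance_to_boundary}, which characterizes $\gamma$ as the minimum $\ell_2$-distortion in the embedding space required to move $g(x)$ across the decision boundary separating its closest prototype $c_1$ from the second-closest prototype $c_2$. Since the perturbed embedding $g(x+\delta)$ lies within an open ball of radius $\gamma$ around $g(x)$, it cannot cross this boundary; in particular, $c_1$ remains strictly closer to $g(x+\delta)$ than $c_2$, and a fortiori closer than any other prototype $c_k$ (whose boundary with $c_1$ is at least as far as the $c_1/c_2$ boundary by choice of $c_2$ as the runner-up). Hence $\arg\min_k \rho(g(x+\delta), c_k) = \arg\min_k \rho(g(x), c_k)$, establishing \eqref{eq:class_robust} for all $\delta$ with $\|\delta\|_2 \le r$ (taking a limit for the closed inequality, or noting the argument works with $\le$ directly up to the boundary).

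There is no real obstacle here beyond bookkeeping; the only subtlety worth being explicit about is verifying that Theorem \ref{th:distance_to_boundary}'s guarantee, stated for the pair $(c_1,c_2)$ of the two \emph{closest} prototypes, really controls the full arg min over all $K$ prototypes. This is immediate from the fact that, among all boundaries between $c_1$ and another prototype $c_k$, the one toward $c_2$ is the nearest to $g(x)$, so staying inside the $\gamma$-ball around $g(x)$ keeps $c_1$ strictly preferred over every competitor.
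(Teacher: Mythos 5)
Your overall chain is exactly the paper's (essentially one-line) argument: Theorem \ref{th:vector-lipschitz} gives $\|g(x+\delta)-g(x)\|_2 \le L\|\delta\|_2 < \gamma$ whenever $\|\delta\|_2 < \gamma/L$, and Theorem \ref{th:distance_to_boundary}'s reading of $\gamma$ as the minimal embedding-space distortion that changes the prediction then yields $r=\gamma/L$. Up to that point you coincide with the paper, which indeed offers nothing more than this composition.

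The genuine problem is the lemma you add to handle the full arg min over all $K$ prototypes: ``among all boundaries between $c_1$ and another prototype $c_k$, the one toward $c_2$ is the nearest to $g(x)$.'' This is false in general, even for normalized embeddings. The Euclidean distance from $g(x)$ to the bisector $\{z:\|z-c_1\|_2=\|z-c_k\|_2\}$ is $\bigl(\|c_k-g(x)\|_2^2-\|c_1-g(x)\|_2^2\bigr)/\bigl(2\|c_k-c_1\|_2\bigr)$, and a prototype that is farther from $g(x)$ but also much farther from $c_1$ (larger denominator) can realize a nearer boundary than the runner-up. Concretely, take $g(x)=(0.6,0.4,0)$, $c_1=(1,0,0)$, $c_2=(0.5,0,\sqrt{3}/2)$, $c_3=(-0.3,0.9,\sqrt{0.1})$: all prototypes are unit-norm and $\|g(x)\|_2<1$, the distance ordering is $c_1,c_2,c_3$ (inner products $0.6,0.3,0.18$), yet the margin to the $c_1/c_2$ boundary is $0.3$ while the margin to the $c_1/c_3$ boundary is $0.84/(2\cdot 1.61)\approx 0.26<0.3$. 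So an embedding displacement of norm strictly less than the $\gamma$ computed from the top-two pair can already flip the prediction (to $c_3$), and your step ``$c_1$ remains closer than any other prototype'' fails. The sound repair is to certify with the minimum of the pairwise margins over all competitors, $\gamma=\min_{k\neq 1}\bigl(\|c_k-g(x)\|_2^2-\|c_1-g(x)\|_2^2\bigr)/\bigl(2\|c_k-c_1\|_2\bigr)$, or equivalently to verify the ball-versus-bisector condition for every $k\neq 1$, not only for $c_2$. To be fair, the paper itself never addresses this point: its proof of Theorem \ref{th:distance_to_boundary} (and Algorithm \ref{alg:distance_alpha}) also only examines the two closest prototypes and simply asserts the ``minimum distortion'' interpretation, so you correctly identified the one nontrivial subtlety in this theorem, but the argument you supply to close it does not work.
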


\section{Certification protocol}
In this section, we describe the numerical implementation of our approach and estimate its failure probability. 
\subsection{Estimation of prediction of smoothed classifier}
As mentioned previously, the procedure of few-shot classification is performed by assigning an object $x$ to the closest class prototype. Unfortunately, given the smoothed function $g$ in the form from  the Theorem \ref{th:vector-lipschitz} and class prototype $c_k$ from \eqref{eq:class_prototypes}, it is impossible to compute the value  $\rho(g(x), c_k)$ explicitly as well as to determine the closest prototype, since it is in general unknown how does $g(x)$ look like. We propose both to estimate the closest prototype for classification and to estimate the distance to the closest decision boundary $\gamma$ from  Theorem \ref{th:distance_to_boundary} as the largest class-preserving perturbation in the embedding space by computing two-sided confidence intervals for random variables 

\begin{align}
 \xi_1 = \|\hat{g}(x) - c_1\|_2^2,\ \dots,\ \xi_K = \|\hat{g}(x) - c_K\|_2^2,   
\end{align}

 where  $\hat{g}(x) = \frac{1}{n} \sum_{i=1}^n f(x+\varepsilon_i)$ is the estimation of $g(x)$ computed as empirical mean by $n$ samples of noise, and one-sided confidence interval for  $\gamma$ from  the Theorem \ref{th:distance_to_boundary}, respectively. Pseudo-code for both procedures  is presented in Algorithms \ref{alg:closest_prot}-\ref{alg:distance_alpha}.



\begin{figure}[H]
\begin{algorithm}[H]
    \begin{algorithmic} 
    \STATE Function{ {CLOSEST}($f, \sigma, x, n, \{c_i\}_{i=1}^K, T, \alpha$)}
        \STATE{$fs \gets \emptyset$}
        \STATE{$i \gets 0$}
        \STATE{$n_0 = n$}
        \WHILE {$n \le T$}
        \STATE{$\varepsilon_1, \dots, \varepsilon_n \sim \mathcal{N}(0, \sigma^2I)$}
        \STATE {$\hat{g}(x) \gets \frac{1}{n} \sum_{i=1}^n f(x+\varepsilon_i) $}
    	\STATE{$fs \gets fs \cup \{\hat{g}(x)\}$}
    	\FORALL{$i \in [1,\dots, K]$}
    	    \STATE{$C = c_i$}
    	    \STATE{$dsToC \gets \rho(fs, C)$}
    	    \STATE{$(l_C, u_C) \gets$ \texttt{TwoSidedConfInt}($dsToC, \alpha)$}
    	    \STATE{$l_i \gets l_C, u_i \gets u_C$}
    	\ENDFOR
    	\STATE{$A \gets \arg\min \ \{l_i\}_{i=1}^K$}
    	\IF{$u_A < \min(\{l_i\}_{i=1, i\ne A}^K)$}
    	    \STATE Return {A, fs} 
    	    \ELSE \STATE{$n \gets n+n_0$}
    	    \COMMENT{Increase number of samples used for computing an approximation $\hat{g}(x)$ until the number of observations is large enough to determine two leftmost intervals or until $n=T$}
    	\ENDIF
        \ENDWHILE
        
    \STATE EndFunction
    \end{algorithmic}
    \caption{Closest prototype computation algorithm. \\\textbf{GIVEN:} base classifier $f$, noise $\sigma$, object $x$, number of samples $n$ for $\hat{g}(x)$, class prototypes $\{c_i\}_{i=1}^K$, maximum number of samples $T$ and confidence level $\alpha$, \\\textbf{RETURNS:} index $A$  of the closest prototype.}
    \label{alg:closest_prot}
\end{algorithm}
\vspace{-1cm}
\end{figure}

The Algorithm \ref{alg:closest_prot} describes an inference procedure for the smoothed classifier from the Theorem \ref{th:vector-lipschitz}; the Algorithm \ref{alg:distance_alpha} uses Algorithm \ref{alg:closest_prot} and, given input parameters,  estimates an adversarial risk from the Theorem \ref{th:distance_to_boundary} --  it determines two closest to smoothed embedding $g(x)$ prototypes $A$ and $B$ and produces the lower confidence bound  for the distance between $g(x)$ and the decision boundary between $A$ and $B.$ Combined with  analysis from the Theorem \ref{th:vector-lipschitz}, it  provides the certified radius for a sample -- the smallest value of  $l_2-$norm of perturbation in the input space required to change the prediction of the smoothed classifier. In the next subsection, we discuss in detail the procedure of computing confidence intervals in Algorithms  \ref{alg:closest_prot}-\ref{alg:distance_alpha}.

\subsection{Applicability of algorithms}
The  computations of smoothed function and distances to class prototypes and decision boundary in  Algorithms \ref{alg:closest_prot}-\ref{alg:distance_alpha} are based on  the estimations of corresponding  random variables, thus, it is necessary to analyze  applicability of the algorithms. In this section, we propose a way to compute confidence intervals for squares of the distances between the  estimates of embeddings and class prototypes.

\textbf{Computation of confidence intervals for the squares of distances.} Recall that one way to estimate the value of a parameter of a random variable is to compute a confidence interval for the corresponding statistic. In this work, we construct intervals by  applying well-known Hoeffding inequality \cite{hoeffding1994probability} in the form

\begin{equation}\label{eq:hoeffding}
    \mathbb{P}\left(|\overline{X} - \mathbb{E}(\overline{X})| \ge t \right) \le 2 \exp\left(-\frac{2t^2n^2}{\sum_{i=1}^n (b_i-a_i)^2}\right),
\end{equation}

where $\overline{X}$ and $\mathbb{E}(\overline{X})$ are sample mean and population mean of random variable $X$, respectively, $n$ is the number of samples and numbers $a_i, b_i$ are such that $\mathbb{P}(X_i \in (a_i, b_i)) =1$.

\begin{figure}[H]
\begin{algorithm}[H]
    \begin{algorithmic}
    \STATE Function{ EMBEDDING-RISK}($f, \sigma, x, n, \{c_i\}_{i=1}^K, T, \alpha$)
        \STATE{$A,\ fs_A \gets$ CLOSEST($f, \sigma, x, n, \{c_i\}_{i=1}^K, T, \alpha$)}
        \STATE{$B,\ fs_B \gets$ CLOSEST($f, \sigma, x, n, \{c_i\}_{i=1, i \ne A}^K, T, \alpha$)}
        \STATE{$fs \gets fs_A \cup fs_B$}
        
        \STATE{$\gamma s \gets \emptyset$}
        \FORALL{$f \in fs$}
            \STATE{$\gamma = \frac{\|c_B - g\|^2 - \|c_A -g\|^2}{2\|c_B - c_A\|^2}$}
            \STATE{$\gamma s \gets \gamma s \cup \{\gamma\}$}
        \ENDFOR
        \STATE{$\Gamma \gets \texttt{LowerConfBound}(\gamma s, \alpha)$}
        \STATE{Return{ $\Gamma$}}
    \STATE EndFunction
    \end{algorithmic}
    \caption{Adversarial embedding risk computation algorithm. \\\textbf{GIVEN:} base classifier $f$, noise $\sigma$, object $x$, number of samples $n$ for $\hat{g}(x)$,  class prototypes $\{c_i\}_{i=1}^K$, maximum number of samples $T$ for $\hat{g}(x)$ and confidence level $\alpha$ \\\textbf{RETURNS:} lower bound $\Gamma$ for the adversarial risk $\gamma$.}
    \label{alg:distance_alpha}
\end{algorithm}
\vspace{-1cm}
\end{figure}

However, a confidence interval for the distance $\|\hat{g}(x) - c_k\|_2$ with a certain confidence covers an \emph{expectation of distance} $\mathbb{E}(\|\hat{g}(x)-c_k\|_2)$, not the \emph{distance for expectation} $\|\mathbb{E}(\hat{g}(x) - c_k)\|_2 = \|g(x)-c_k\|_2.$

To solve this problem, we propose to compute confidence intervals for the dot product of vectors. Namely, given a quantity $\xi_{x, k} = 
    \langle {g}(x) - c_k, {g}(x) - c_k \rangle,$ we sample its unbiased estimate with at maximum $2n$ samples of noise (here we have to mention that the number of samples $n$ from  Algorithm \ref{alg:closest_prot} actually doubles since we need a pair of estimates $\hat{g}(x)$ of  smoothed embeddings):

\begin{align}\label{eq:unb_est}
    &\hat{\xi}_{x,k} = \langle \frac{1}{n} \sum_{i=1}^n f(x+\varepsilon_i) - c_k,  \frac{1}{n} \sum_{j=n+1}^{2n} f(x+\varepsilon_j) - c_k\rangle
\end{align}

and compute confidence interval $(l_{x,k}, u_{x,k})$
such that given

\begin{equation}\label{eq:alpha}
    \frac{\alpha}{3} = 2\exp\left(-\frac{2t^2n^2}{\sum_{i=1}^n (b_i - a_i)^2}\right),
\end{equation}

the population mean $\mathbb{E}(\hat{\xi}_{x,k})$ is most probably located within it, or, equivalently, $\mathbb{P}\left(l_{x,k} \le \mathbb{E}(\hat{\xi}_{x,k}) \le  u_{x,k} \right) \ge 1-\alpha.$ We have to mention that there are three confidence intervals for three terms (one with quadratic number of samples and two with the linear numbers of samples) in the expression \eqref{eq:unb_est}, that is why there is fraction $\frac{1}{3}$ in the equation \eqref{eq:alpha}.

Also note that the population mean $\mathbb{E}(\hat{\xi}_{x,k})$ is exactly $\|g(x) - c_k\|^2_2$, since

\begin{align}\label{eq:exp_square}
    &\mathbb{E}(\hat{\xi}_{x,k}) = \mathbb{E} \langle \frac{1}{n} \sum_{i=1}^n f(x+\varepsilon_i) - c_k,  \frac{1}{n} \sum_{j=n+1}^{2n} f(x+\varepsilon_j) - c_k\rangle 
    = \|g(x) - c_k\|_2^2
\end{align}

since $f(x+\varepsilon_i)$ and $f(x+ \varepsilon_j)$ are independent random variables for $i \ne j$. Finally, note that the confidence interval $(l_{x,k}, u_{x,k})$ for the quantity $\|{g}(x)-c_k\|^2_2$ implies confidence interval

\begin{equation}\label{eq:our_ci}
    (\sqrt{l_{x,k}}, \sqrt{u_{x,k}})
\end{equation}

for the quantity $\|{g}(x)-c_k\|_2.$ Thus, the procedures \texttt{TwoSidedConfInt} and \texttt{LowerConfBound} from algorithms return an interval from \eqref{eq:our_ci} and its left bound for the random variable representing corresponding distance, respectively.



\section{Experiments}

\subsection{Datasets}


For the experimental evaluation of our approach we use several well-known datasets for few-shot learning classification. \textit{Cub-200-2011} \cite{wah2011caltech} is a dataset  with $11,788$ images of $200$ bird species, where $5864$ images of $100$ species are in the train subset and $5924$ images of other $100$ species are in the test subset. It is notable that a lot of species presented in dataset have degree of visual similarity, making classification of ones a challenging task even for humans. \textit{mini}ImageNet \cite{vinyals2016matching} is a substet of images from \textit{ILSVRC 2015} \cite{russakovsky2015imagenet} dataset with $64$ images categories in train subset, $16$ categories in validation subset and $20$ categories in test subset with $600$ images of size $84 \times 84$  in each category. \textit{CIFAR FS} \cite{bertinetto2018meta} is a subset of \textit{CIFAR 100} \cite{krizhevsky2009learning} dataset which was generated in the same way as \textit{mini}ImageNet and contains $37800$ images of $64$ categories in the train set and $11400$ images of $20$ categories in the test set. Experimental setup for all the datasets is presented in the next section. 

\begin{figure*}[t]
\centering
\subfloat[Cub-200-2011]{\includegraphics[
width=.32\linewidth]{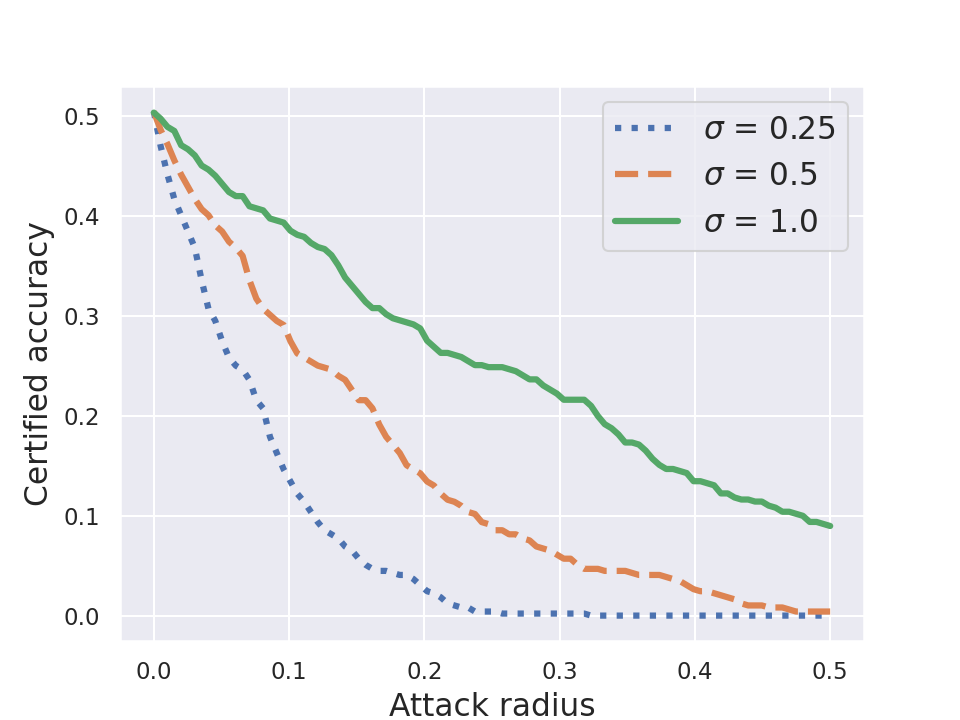}}\;
\subfloat[CIFAR-FS]{\includegraphics[width=.32\linewidth]{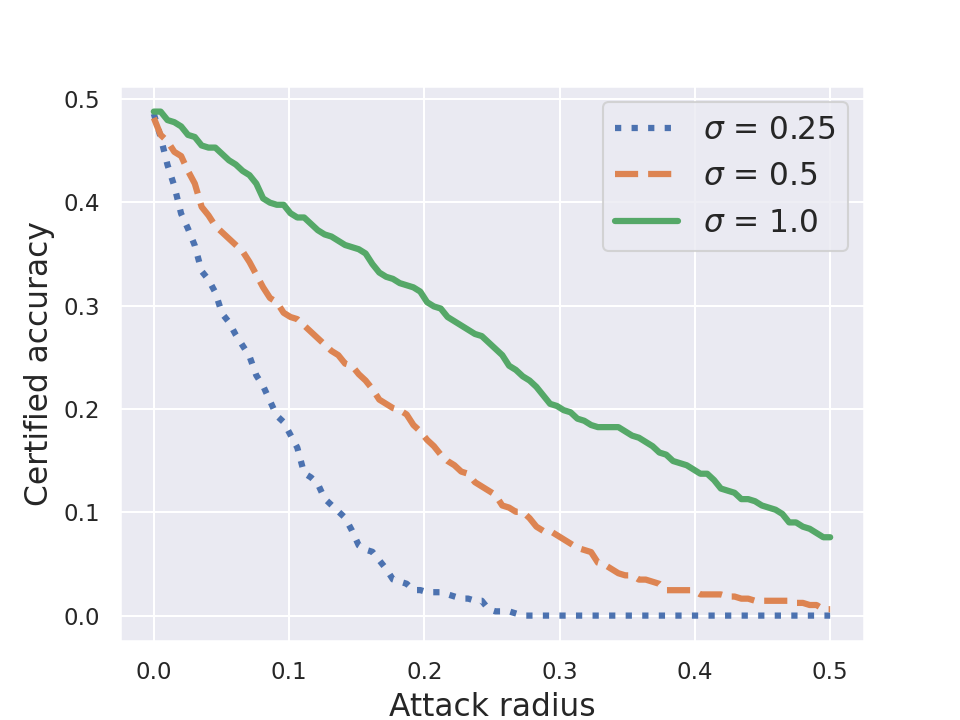}}\;
\subfloat[\textit{mini}ImageNet]{\includegraphics[width=.32\linewidth]{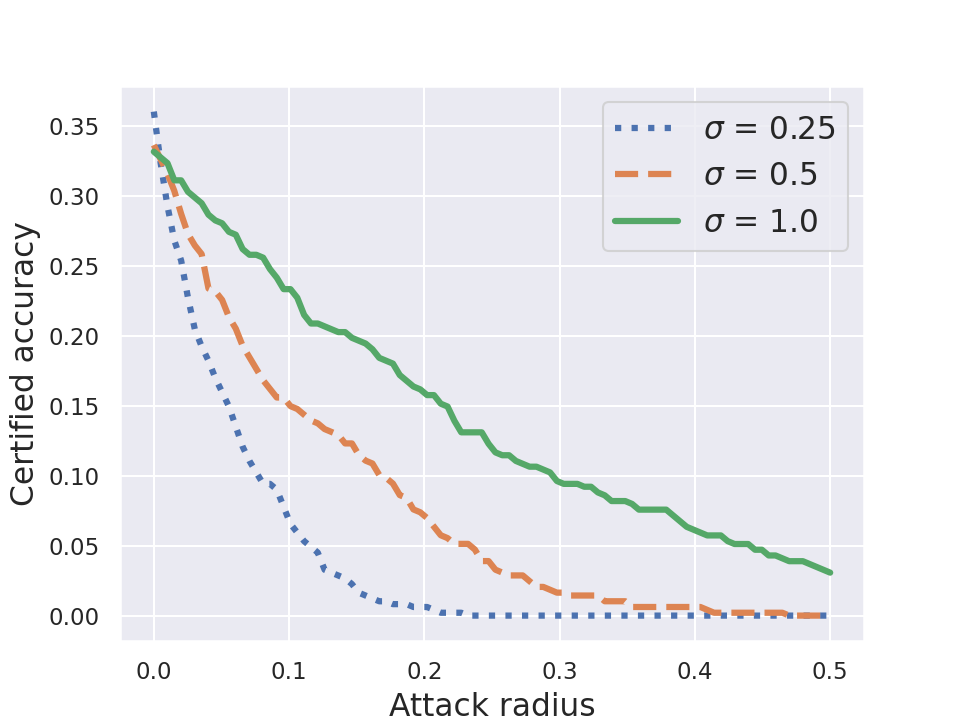}}
\caption{Dependency of certified accuracy on attack radius $\varepsilon$ for different $\sigma$, 1-shot case, $n=1000$.}
\label{inarow}
\end{figure*}
 
\begin{figure*}[t]
\centering
\subfloat[Cub-200-2011]{\includegraphics[ width=.32\linewidth]{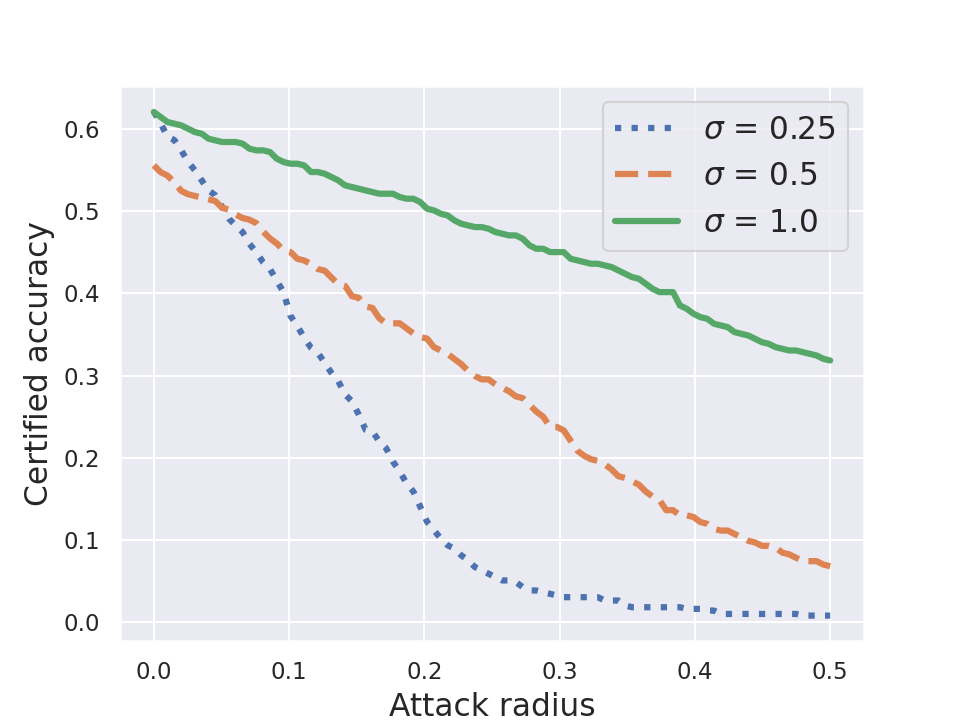}}\;
\subfloat[CIFAR-FS]{\includegraphics[width=.32\linewidth]{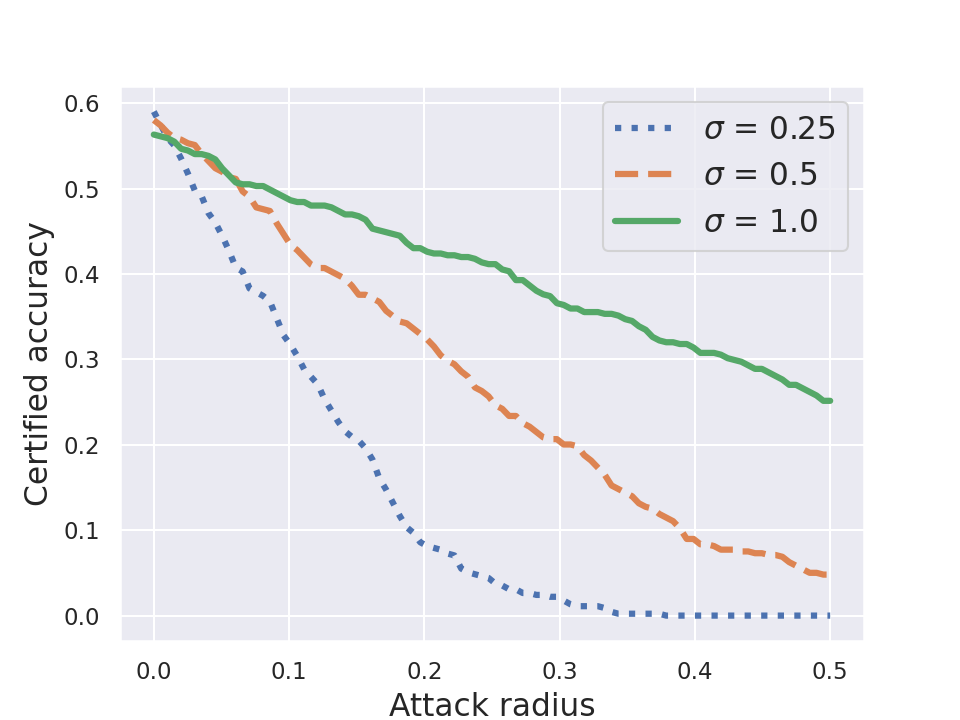}}\;
\subfloat[\textit{mini}ImageNet]{\includegraphics[width=.32\linewidth]{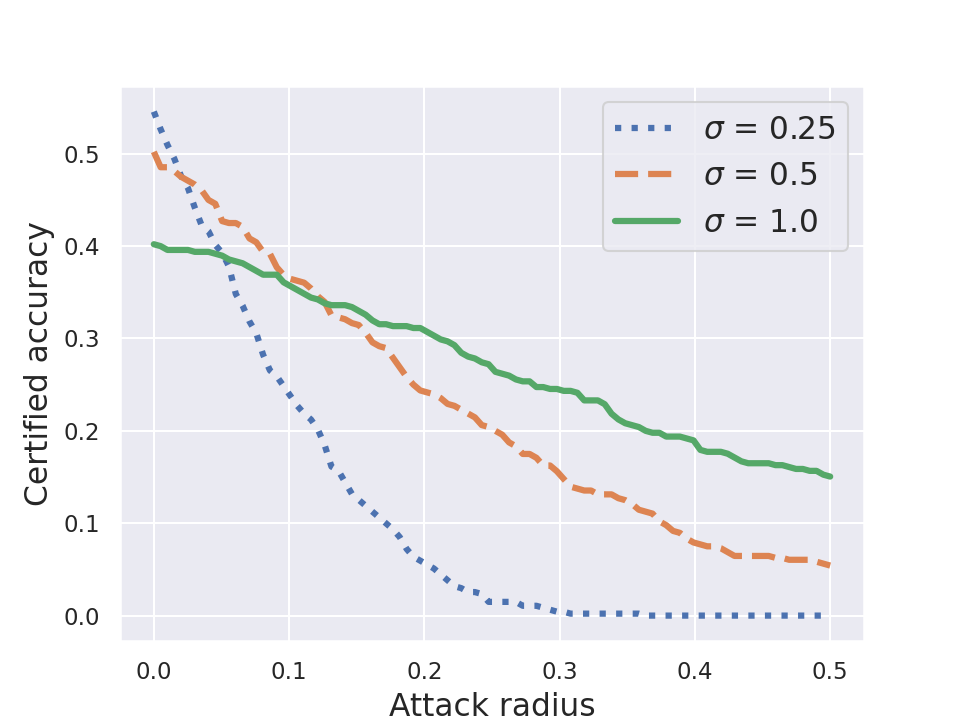}}
\caption{Dependency of certified accuracy on attack radius $\varepsilon$ for different $\sigma$, 5-shot case, $n=1000$.}
\label{inarow2}
\end{figure*}

\subsection{Experimental settings and computation cost}


Following \cite{cohen2019certified}, we compute approximate certified test set accuracy to estimate the performance of the smoothed model prediction with the Algorithm \ref{alg:closest_prot} and embedding risk computation with the Algorithm \ref{alg:distance_alpha}. The baseline model we used for experiments is a prototypical network introduced in \cite{snell2017prototypical} with ConvNet-4 backbone. Compared to the original architecture, an additional fully-connected layer was added in the tail of the network to map embeddings to 512-dimensional vector space.  The model was trained to solve 1-shot and 5-shot classification tasks on each dataset, with 5-way classification on each iteration.

\textbf{Parameters of expeiments.} For data augmentation, we applied Gaussian noise with zero mean, unit variance and probability $0.3$ of augmentation. Each dataset was certified on a subsample of 500 images with default parameters for the Algorithm \ref{alg:closest_prot}: number of samples $n = 1000$, confidence level $\alpha = 0.001$ and variance $\sigma = 1.0$, unless stated otherwise. For our settings, it may be shown from simple geometry that values $(a_i, b_i)$ from \eqref{eq:hoeffding} are such that $b_i - a_i \le 4$ so we use $b_i-a_i=4.$ 
The maximum number of samples $T$ in the Algorithm \ref{alg:closest_prot} is set to be $T=5\times 10^5.$

\textbf{Computation cost.} In the table below, we report the  computation time of the certification procedure per image on Tesla V100 GPU for \textit{Cub-200-2011} dataset. Standard deviation in seconds appears to be significant because the number of main loop iterations required to separate the two leftmost confidence intervals varies from image to image in the test set.

\begin{table}[h]
\centering
\caption{Computation time per image of implementation of the Algorithm \ref{alg:distance_alpha}, Cub-200-2011.}
\begin{tabular}{|l|l|l|l|}
\hline
n                      & $10^3$              &$10^4$                & $10^5$                \\ \hline
t, sec & 0.044 $\pm$ 0.030 & 0.509 $\pm$ 0.403 & 4.744 $\pm$ 2.730 \\ \hline
\end{tabular}
\label{tab:computation_time}
\end{table}

\subsection{Results of experiments}
In this section, we report results of our experiments. In our evaluation protocol, we compute approximate certified test set accuracy, $CA$. Given a sample $x$, a smoothed classifier $g(\cdot)$ from the Algorithm \ref{th:vector-lipschitz} with an assigned classification rule 
    $h(x) = \arg\min_{i \in \{1, \dots, K\}} \|g(x) - c_k\|_2$,
threshold value $\varepsilon$ for $l_2-$norm of additive perturbation and the robustness guarantee $r=r(x)$ from  the Algorithm \ref{th:robustness_guarantee}, we compute $CA$ on test set $S$ as follows:

\begin{align}\label{eq:ca}
    &CA(S, \varepsilon) = \frac{|(x,y) \in S: r(x) > \varepsilon\ \&\ h(x) = y|}{|S|}.
\end{align}


In other words, we treat the model $g(\cdot)$ as certified at point $x$  under perturbation of norm $\varepsilon$ if $x$ is correctly classified by $g(\cdot)$ (what means that the procedure of classification described in the Algorithm \ref{alg:closest_prot} does not abstain from classification of $x$) and  $g(\cdot)$ has the value of certified radius $r(x) > \varepsilon.$

\textbf{Visualization of results.} The  figures \ref{inarow}-\ref{inarow2} represent dependencies of certified accuracy on 
the value of norm of additive perturbation for different learning settings (1-shot and 5-shot learning). The value of attack radius corresponds to the threshold $\varepsilon$ from \eqref{eq:ca}. For \textit{Cub-200-2011} dataset we provide a dependency of certified accuracy for different sample size $n$ for  the Algorithm \ref{alg:closest_prot} (in the Figure \ref{fig:cub_n_1}).

\section{Limitations}
In this section, we provide failure probability of Algorithms \ref{alg:closest_prot}-\ref{alg:distance_alpha}, discuss abstains from classification in  the Algorithm \ref{alg:closest_prot} and speculate on the application of our method in other few-shot scenarios.

\subsection{Estimation of errors of algorithms}\label{subsec:error}

Note that the value of $\alpha$ from \eqref{eq:alpha} is the probability  of the value of $\rho_{x, k} = \|g(x) - c_k\|_2$ not to belong to the corresponding interval of the form from  \eqref{eq:our_ci}. Given a sample $x$, the procedure in the Algorithm \ref{alg:closest_prot} returns two closest prototypes to the $g(x).$  To determine two leftmost confidence intervals, all the distances $\rho_{x,k}$ have to be located within corresponding intervals, thus, according to the independence of computing these two intervals, the error probability for  the Algorithm \ref{alg:closest_prot} is $q_1 = K \alpha$, where $K$ is the number of classes. Similarly, the procedure in the Algorithm \ref{alg:distance_alpha} outputs the lower bound for the adversarial risk with coverage at least $1-\alpha$ and depend on the output of  the Algorithm \ref{alg:closest_prot} inside, and, thus, has error probability $q_2 = 1 - (1-\alpha)(1-K\alpha) = \alpha + K \alpha - K \alpha^2$ that corresponds to  returning an overestimated lower bound for the adversarial risk from the Theorem  \ref{th:distance_to_boundary}.

\subsection{Abstains from classification and extension to other few-shot approaches}
It is crucial to note that the  procedure in the Algorithm \ref{alg:closest_prot} may require a lot of samples to  distinguish two leftmost   confidence intervals and sometimes does not finish before reaching threshold value $T$ for sample size. As a result, there may be  input objects at which the smoothed classifier can be neither evaluated nor certified. In this subsection, we report the fraction of objects in which the Algorithm \ref{alg:closest_prot} abstains from determining the closest class prototype (see Tables \ref{tab:1shot_abstain}-\ref{tab:5shot_abstain}).
\begin{minipage}{0.46\textwidth}
\begin{figure}[H]
\begin{centering}
\includegraphics[width=\textwidth]{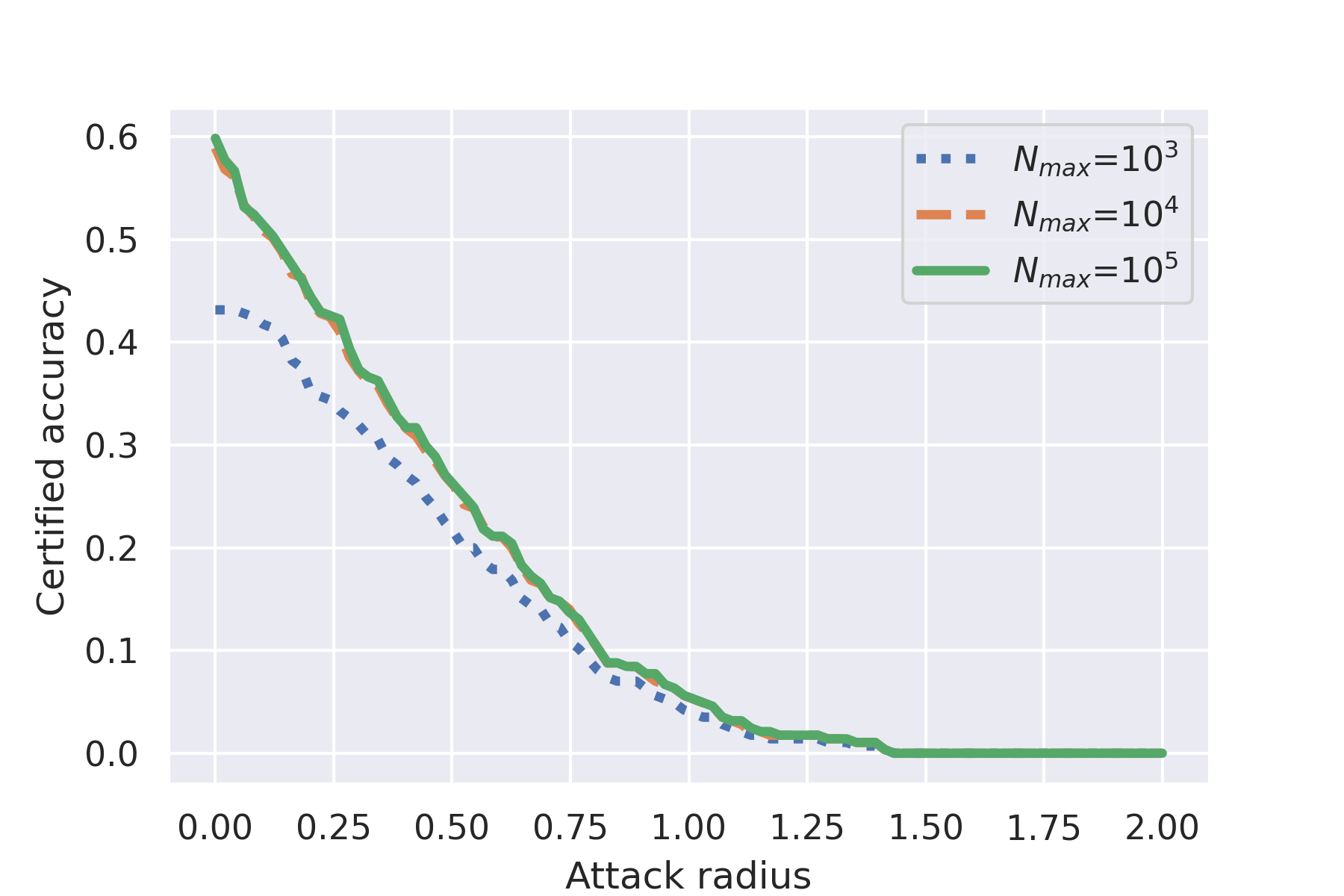}
\caption{Dependency of certified accuracy on attack radius $\varepsilon$ for different number of samples $n$ of the Algorithm  \ref{alg:closest_prot}, \textit{CIFAR-FS} dataset, 5-shot case. Is is notable that relatively small number of samples may be used to achieve satisfactory level of certified accuracy.} 
\label{fig:cub_n_1}
\end{centering}
\end{figure}
\end{minipage}
\;
\begin{minipage}{0.46\textwidth}
\begin{table}[H]
\centering
\caption{Percentage of non-certified objects in test subset, 1-shot case.}
\begin{tabular}{|@{\hskip 0.03in}l@{\hskip 0.03in}|l@{\hskip 0.03in}|l@{\hskip 0.03in}|l@{\hskip 0.03in}|}
\hline
              & $\alpha = 10^{-2}$ & $\alpha = 10^{-3}$ & $\alpha = 10^{-4}$ \\ \hline
Cub-200-2011   & 1.6\%             & 1.6\%             & 1.6\%             \\ \hline
CIFAR-FS     & 2.2\%               & 2.2\%             & 2.4\%             \\ \hline
\textit{mini}ImageNet & 1.9\%             & 2.2\%                  & 2.4\%                   \\ \hline
\end{tabular}
\label{tab:1shot_abstain}
\end{table}

\begin{table}[H]
\centering
\caption{Percentage of non-certified objects in test subset, 5-shot case.}
\begin{tabular}{|@{\hskip 0.03in}l@{\hskip 0.03in}|l@{\hskip 0.03in}|l@{\hskip 0.03in}|l@{\hskip 0.03in}|}
\hline
              & $\alpha = 10^{-2}$ & $\alpha = 10^{-3}$ & $\alpha = 10^{-4}$ \\ \hline
Cub-200-2011   & 1.2\%                   & 1.2\%            & 1.4\%                   \\ \hline
CIFAR-FS     & 3.0\%            & 3.4\%            & 3.8\%            \\ \hline
\textit{mini}ImageNet & 2.9\%            & 2.9\%                    & 3.0\%            \\ \hline
\end{tabular}
\label{tab:5shot_abstain}
\end{table}
\end{minipage}

Since our method is based on randomized smoothing, among the approaches presented in \cite{triantafillou2019meta} it is applicable for matching networks and to some extent to MAML networks. In case of matching networks, new sample is labeled as weighted cosine distance to support samples, so it is easy to transfer guarantees for $l_2-$distance to the ones for cosine distance in case of normalized embeddings. For MAML, the smoothing may be applied to the embedding function as well, but theoretical derivations of certificates are required.

\section{Related work}

Breaking neural networks with adversarial attacks and empirical defending from them have a long history of cat-and-mouse game. Namely, for a particular proposed defense against existing adversarial perturbations, a new more aggressive  attack is found. This motivated researchers to find defenses that are mathematically provable and certifiably robust to different kinds of input manipulations. Several works proposed exactly verified neural networks based on Satisfiability Modulo Theories solvers \cite{katz2017reluplex,ehlers2017formal}, or mixed-integer linear programming \cite{lomuscio2017approach,fischetti2017deep}. These methods are found to be computationally inefficient, although they guarantee to find adversarial examples, in the case if they exist. Another line of works use more relaxed certification \cite{wong2018provable,gowal2018effectiveness,raghunathan2018certified}. Although these methods aim to guarantee that an adversary does not exist in a certain
region around a given input, they suffer from scalability to big networks and large datasets. The only scalable to large datasets provable defense against adversarial perturbations is \emph{randomized smoothing}. Initially, it was found as an  empirical defense to mitigate adversarial effects in neural networks \cite{liu2018towards,xie2017mitigating}. Later several works showed its mathematical proof of certified robustness \cite{lecuyer2019certified,li2018certified,cohen2019certified,salman2019provably}. Lecuyer et al \cite{lecuyer2019certified} first provided proof of certificates against adversarial examples using differential privacy. Later, Cohen et al \cite{cohen2019certified} provided the tightest bound using Neyman-Pearson lemma. Interestingly, alternative proof using Lipschitz continuity was found \cite{salman2019provably}. The scalability and simplicity of randomized smoothing attracted significant attention, and it was extended beyond $l_2-$perturbations \cite{lee2019tight,teng2019ell_1,li2021tss,levine2020wasserstein,levine2020robustness,kumar2021center,mohapatra2020higher,yang2020randomized}. 

\section{Conclusion and future work}
In this work, we extended randomized smoothing as a defense against additive norm-bounded adversarial attacks to the case of classification in the embedding space that is used in  few-shot learning scenarios. We performed an analysis of  Lipschitz continuity of smoothed normalized embeddings and derived a robustness certificate against $l_2-$attacks. Our theoretical findings are supported experimentally on several datasets. There are several directions for  future work: our approach can possibly be extended to other types of attacks, such semantic transformations; also, it is important to reduce the computational complexity of the certification procedure.







\bibliography{example_paper}
\bibliographystyle{plain}

\newpage
\appendix

\section{Proofs}

 In this section, we provide proofs of the main results stated in our work.

\addtocounter{theorem}{-3}

\begin{theorem}Suppose that $f: \mathbb{R}^D \to \mathbb{R}^d$ is a deterministic function with corresponding continuously differentiable smoothed function $g(x) = {\mathbb{E}_{\varepsilon \sim \mathcal{N}(0, \sigma^2 I)}} f(x+\varepsilon)$. Then if for all $x$, $\|f(x)\|_2 = 1$, then $g(x)$ is $L-$Lipschitz in $l_2-$norm with $L = \sqrt{\frac{2}{\pi \sigma^2}}.$  
\end{theorem}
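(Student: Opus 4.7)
The strategy is to reduce the vector-valued case to the scalar case by duality and thereby avoid a direct multivariate Stein-type estimate, which would produce a bound depending on $\mathbb{E}\|\varepsilon\|_2 \sim \sqrt{D}$. The hypothesis $\|f(x)\|_2 = 1$ is used precisely so that, after projecting onto any unit direction in the output space, one obtains a scalar base function valued in $[-1,1]$, for which the known smoothed-Lipschitz bound is dimension-free.

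Concretely, I would fix an arbitrary unit vector $u \in \mathbb{R}^d$ and define the scalar smoothed function
\[
    h_u(x) \;=\; \langle u, g(x)\rangle \;=\; \mathbb{E}_{\varepsilon \sim \mathcal{N}(0, \sigma^2 I)} \langle u, f(x+\varepsilon)\rangle,
\]
where the last equality follows from linearity of expectation together with the continuous differentiability of $g$, which justifies moving the linear functional inside the expectation. Cauchy--Schwarz applied pointwise gives $|\langle u, f(y)\rangle| \le \|u\|_2 \|f(y)\|_2 = 1$, so $h_u$ is the Gaussian smoothing of a base function taking values in $[-1,1]$.

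Next, I would invoke the scalar result of Salman et al.\ (or reprove it in a line or two): if $\phi: \mathbb{R}^D \to [-1,1]$ and $\Phi(x) = \mathbb{E}_\varepsilon \phi(x+\varepsilon)$, then $\|\nabla \Phi(x)\|_2 \le \sqrt{2/(\pi\sigma^2)}$. The short proof I have in mind is to pick a unit direction $v \in \mathbb{R}^D$, decompose $\varepsilon = sv + \varepsilon_\perp$ with $s \sim \mathcal{N}(0,\sigma^2)$ independent of $\varepsilon_\perp$, condition on $\varepsilon_\perp$, and observe that the resulting one-dimensional smoothed function of $t$ has derivative bounded (in absolute value) by $\sqrt{2/(\pi\sigma^2)}$ via a direct calculation using the derivative of the 1D Gaussian density; taking a supremum over $v$ controls $\|\nabla \Phi\|_2$. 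Applying this to $\phi(y) = \langle u, f(y)\rangle$ yields
\[
    |h_u(x) - h_u(x')| \;\le\; \sqrt{\tfrac{2}{\pi \sigma^2}}\, \|x-x'\|_2 \qquad \text{for every unit } u.
\]

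Finally, I would close by duality: since $\|a\|_2 = \sup_{\|u\|_2 = 1} \langle u, a\rangle$,
\[
    \|g(x) - g(x')\|_2 \;=\; \sup_{\|u\|_2 = 1}\bigl(h_u(x) - h_u(x')\bigr) \;\le\; \sqrt{\tfrac{2}{\pi\sigma^2}}\,\|x-x'\|_2,
\]
which is the claim with $L = \sqrt{2/(\pi\sigma^2)}$. The only nontrivial step is the dimension-free scalar Lipschitz bound; once that is in hand, the projection-plus-duality reduction packages the vector-valued case for free, and crucially the constant does not pick up any dependence on $d$ or $D$ because the unit-norm hypothesis on $f$ uniformly controls every scalar projection.
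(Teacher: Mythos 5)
Your argument is correct, and it reaches the paper's constant by a genuinely different packaging of the same underlying estimate. The paper works directly with the Jacobian of $g$: it differentiates under the integral, writes $J_g = \frac{1}{M}\int f(y)(x-y)^\top \exp\bigl(-\tfrac{\|y-x\|_2^2}{2\sigma^2}\bigr)\,dy$, and bounds the spectral norm $\sup_{\|v\|_2=1}\|J_g v\|_2$ by a rotation/change-of-variables trick that aligns $v$ with a coordinate axis, so that $\|f(y)\|_2=1$ reduces everything to $\frac{1}{\sigma^2}\mathbb{E}|z_1|=\sqrt{2/(\pi\sigma^2)}$ for a one-dimensional Gaussian $z_1$. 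You instead never touch the full Jacobian: you project the output onto an arbitrary unit vector $u$, observe via Cauchy--Schwarz that $\langle u, f(\cdot)\rangle$ is a scalar base function in $[-1,1]$, invoke (or reprove by the 1D conditioning argument) the scalar smoothed-Lipschitz bound of Salman et al.\ --- which is exactly the paper's Eq.~(7) --- and then recover the vector statement by the duality $\|a\|_2=\sup_{\|u\|_2=1}\langle u,a\rangle$. The two proofs share the same kernel, namely $\mathbb{E}|\mathcal{N}(0,\sigma^2)|=\sigma\sqrt{2/\pi}$, and in effect you bound $u^\top J_g v$ for all unit $u,v$ where the paper bounds $\|J_g v\|_2$ directly; these give the same constant. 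What your route buys is modularity and economy: no explicit rotation matrix or change of variables is needed, the scalar case is reused as a black box, and the duality step makes transparent why no factor of $d$ or $\sqrt{D}$ appears --- which is precisely the pitfall the paper mentions for naive generalizations. What the paper's route buys is a self-contained derivation that exhibits the Jacobian bound itself, which can be convenient if one later wants finer spectral information. Two cosmetic remarks: moving $\langle u,\cdot\rangle$ inside the expectation needs only linearity (integrability), not differentiability of $g$; and in the final display you should write $\sup_{\|u\|_2=1}\bigl|h_u(x)-h_u(x')\bigr|$ or note that the supremum of $\langle u, g(x)-g(x')\rangle$ is attained at $u$ aligned with $g(x)-g(x')$, either of which closes the step cleanly.
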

\begin{proof}
It is known that everywhere differentiable function $g(x)$  with Jacobian matrix ${J}_{{g}}(x): ({J}_{{g}}(x))_{ij}=\frac{\partial g_i}{\partial x_j}$ is $L-$Lipschitz in $l_2-$norm with $L = \underset{x \in \mathbb{R}^D}{\sup} \|J_g(x)\|_2,$ where $\|J_g(x)\|_2 = \underset{v: \|v\|_2 = 1}{\sup} \|J_g(x) \cdot v\|_2$ is the spectral norm of $J_g(x).$

Taking into account the fact that 
\begin{align}\label{eq:how_looks_g}
g(x) &= \underset{\varepsilon \sim \mathcal{N}(0, \sigma^2 I)}{\mathbb{E}} f(x+\varepsilon) = \frac{1}{(2\pi\sigma^2)^{n/2}} \int_{\mathbb{R}^D} f(x+\varepsilon) \exp\left(-\frac{\|\varepsilon\|^2_2}{2\sigma^2}\right) d\varepsilon,
\end{align}

we may derive its Jacobian matrix:

\begin{align}\label{eq:g_jacobian}
J_g &= \nabla g(x) = \nabla \underset{\varepsilon \sim \mathcal{N}(0, \sigma^2 I}{\mathbb{E}} f(x+\varepsilon) = \nabla \left( \frac{1}{(2\pi\sigma^2)^{n/2}} \int_{\mathbb{R}^D} f(x+\varepsilon) \exp\left(-\frac{\|\varepsilon\|^2_2}{2\sigma^2}\right) d\varepsilon\right) =\\
&= \nabla \left( \frac{1}{(2\pi\sigma^2)^{n/2}} \int_{\mathbb{R}^D} f(y) \exp\left(-\frac{\|y-x\|^2_2}{2\sigma^2}\right) dy\right) = \\ &=\frac{1}{M} \int_{\mathbb{R}^D} f(y) (x-y)^\top \exp\left(-\frac{\|y-x\|^2_2}{2\sigma^2}\right) dy,
\end{align}

where $M = (2\pi\sigma^2)^{n/2} \sigma^2.$ In order to estimate the spectral norm of $J_g$, one can estimate the norm of dot product with normalized vector $v$:

\begin{align}\label{eq:norm_of_dot}
&\|J_g \cdot v \|_2 =\left\|\frac{1}{M} \int_{\mathbb{R}^D} f(y) (x-y)^\top \cdot v \exp\left(-\frac{\|y-x\|^2_2}{2\sigma^2}\right) dy \right\|_2.
\end{align}

Here, we apply a trick: it is possible to rotate vectors in dot product $(x-y)^\top \cdot v$ in such a way that one of the resulting vectors will have one nonzero component after rotation (without loss of generality, assume that this is the first component, $e_1$). Namely, given a rotation matrix $Q = Q(v)$ that is unitary ($QQ^\top = I$), the expression from Eq. \ref{eq:norm_of_dot} becomes

\begin{align}\label{eq:change_variables}
& \|J_g \cdot v \|_2 = \left\|\frac{1}{M} \int_{\mathbb{R}^D} f(y) (x-y)^\top Q Q^\top \cdot v \exp\left(-\frac{\|y-x\|^2_2}{2\sigma^2}\right) dy \right\|_2.
\end{align}

Now, since the rotation does not affect the $l_2-$norm, $\|Q^\top v\|_2 = \|v\|_2 = 1$ and thus  $Q^\top v = e$ and $|e_1| = 1$. 
More than that, under the change of the variables $z^\top = (x-y)^\top Q,$ the following holds:
\begin{itemize}
\item $\|z^\top\|_2 = \|(x-y)^\top Q\|_2 = \|(x-y)^\top\|_2$ since rotation is $l_2-$norm preserving operation;
\item $y = x - Qz$; 
\item $dz = -Qdy$ for the diffentials, leading to $dy = -Q^\top dz.$ 
\end{itemize}

Thus, expression from Eq. \ref{eq:change_variables} becomes 

\begin{align}\label{eq:after_change_variables}
& \|J_g \cdot v \|_2 = \left\|\frac{1}{M} \int_{\mathbb{R}^D} f(x-Qz) z^\top e \exp\left(-\frac{\|z\|^2_2}{2\sigma^2}\right) (-Q^\top) dz \right\|_2.
\end{align}

Now, we bound the norm from Eq. \ref{eq:after_change_variables} using Cauchy–Schwarz inequality:

\begin{align}\label{eq:after_change_variables_bound}
&\|J_g \cdot v \|_2  \le \frac{1}{M} \int_{\mathbb{R}^D} \left\|f(x-Qz) z^\top e \exp\left(-\frac{\|z\|^2_2}{2\sigma^2}\right) (-Q^\top) \right\|_2 dz  \le \\ & \le \frac{1}{(2\pi\sigma^2)^{n/2}\sigma^2} \int_{\mathbb{R}^D}|z_1| \exp\left(-\frac{\|z\|^2_2}{2\sigma^2}\right) dz = \frac{1}{\sigma^2} \underset{z \sim \mathcal{N}(0, \sigma^2 I)}{\mathbb{E}} |z_1|.
\end{align}

since $\|f(x)\|_2 = 1\ \forall x$ and $\|Q\|_2=1$, and $\|z^T e\|_2 = |z_1|$. Here $z_1$ is the first component of $z^\top e$.

The expectation from Eq. \ref{eq:after_change_variables_bound} is known to be equal to $\sigma\ \sqrt{\frac{2}{\pi}},$ and, thus

\begin{align}\label{eq:bound_on_dot}
\|J_g \cdot v \|_2 \le \sqrt{\frac{2}{\pi \sigma^2}}\ \forall v: \|v\|_2=1.
\end{align}

Taking a supremum over all unit vectors $v$, we immediately get $L \le \sqrt{\frac{2}{\pi \sigma^2}}$ what finalizes the proof.

\end{proof}

\begin{theorem}\label{th2:distance_to_boundary}(Adversarial embedding risk) Given an input image $x \in \mathbb{R}^D $ and the embedding $g: \mathbb{R}^D \to \mathbb{R}^d$  the closest point on to decision boundary in the embedding space (see Figure 2) is located at a distance (defined as adversarial embedding risk):
\begin{equation}
    \gamma = \|\Delta\|_2=\frac{\|c_2-g(x)\|^2_2 - \|c_1-g(x)\|^2_2}{2\|c_2-c_1\|^2_2},
\end{equation}
where $c_1\in\mathbb{R}^d$ and $c_2\in\mathbb{R}^d$ are the two closest prototypes. The value of $\gamma$ is the distance between classifying embedding and the decision boundary between classes represented by $c_1$ and $c_2.$ Note that this is the  minimum $l_2-$distortion in the embedding space required to change the prediction of $g.$
\end{theorem}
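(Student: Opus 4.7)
The plan is to recognize that the decision boundary between classes represented by $c_1$ and $c_2$ under the nearest-prototype rule is precisely the perpendicular bisector of the segment $\overline{c_1 c_2}$, since by \eqref{eq:class_prediction} the classifier switches assignment exactly when $\|y - c_1\|_2 = \|y - c_2\|_2$. The minimum $\ell_2$-perturbation $\Delta$ in the embedding space that changes the classification is therefore the orthogonal projection of $g(x)$ onto this bisecting hyperplane, which explains the direction observed in Figure \ref{geom}: the optimal $\Delta$ is parallel to $c_1 - c_2$.

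The key steps I would carry out, in order, are the following. First I would expand the condition $\|y - c_1\|_2^2 = \|y - c_2\|_2^2$ and cancel the common $\|y\|_2^2$ term to obtain the hyperplane equation
\begin{equation*}
2\,y^\top (c_2 - c_1) = \|c_2\|_2^2 - \|c_1\|_2^2,
\end{equation*}
whose normal vector is $c_2 - c_1$. Second, I would apply the standard point-to-hyperplane distance formula at $y = g(x)$, yielding
\begin{equation*}
\|\Delta\|_2 = \frac{\bigl|\,2\,g(x)^\top(c_2 - c_1) - \|c_2\|_2^2 + \|c_1\|_2^2\,\bigr|}{2\,\|c_2 - c_1\|_2}.
\end{equation*}
Third, I would complete the square in the numerator by adding and subtracting $\|g(x)\|_2^2$ inside both terms, which folds the expression into the difference $\|c_2 - g(x)\|_2^2 - \|c_1 - g(x)\|_2^2$. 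Since $c_1$ is assumed to be the closest prototype, this difference is nonnegative, so the absolute value can be dropped, producing the stated form of $\gamma$.

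Finally, to justify that this projection distance is indeed the \emph{minimum} required perturbation (not merely one that works), I would invoke the standard convexity argument: the nearest-prototype decision region for class $1$ is the intersection of half-spaces $\{y : \|y - c_1\|_2 \le \|y - c_k\|_2\}$ over $k \neq 1$, hence convex; the nearest exit from this convex set is through the closest bounding hyperplane, and since $c_2$ is by hypothesis the second-closest prototype, the corresponding bisector is the binding facet at $g(x)$. I anticipate no serious obstacle here — the main subtlety is simply being careful with the sign in dropping the absolute value and noting the potential typographical inconsistency in the denominator's exponent, which I would reconcile by presenting the derivation transparently so the reader can match the terms in the claimed expression.
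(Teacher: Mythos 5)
Your core derivation is correct and follows essentially the same route as the paper's own proof: the paper also reduces the problem to the perpendicular bisector of $c_1$ and $c_2$, argues the optimal $\Delta$ is parallel to $c_2-c_1$, and solves $\|g(x)+\Delta-c_1\|_2^2=\|g(x)+\Delta-c_2\|_2^2$, which is exactly the hyperplane/orthogonal-projection computation you carry out via the point-to-hyperplane distance formula. Both derivations yield $\gamma=\frac{\|c_2-g(x)\|_2^2-\|c_1-g(x)\|_2^2}{2\|c_2-c_1\|_2}$, and you are right to flag the exponent in the displayed denominator: the paper's own last algebraic step ($2(c_2-c_1)^\top\Delta$ with $\Delta=\gamma\frac{c_2-c_1}{\|c_2-c_1\|_2}$) gives the first power, so the square in the statement is only consistent if $\gamma$ is read as the coefficient in $\Delta=\gamma\,(c_2-c_1)$ rather than as $\|\Delta\|_2$.

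The one genuine flaw is your closing minimality argument. The convexity observation (the class-$1$ region is an intersection of half-spaces, so the nearest exit is through the nearest bounding bisector) is fine, but the identification of that nearest bisector with the one belonging to the \emph{second-nearest prototype} is false in general: the distance from $g(x)$ to the bisector with $c_k$ is $\frac{\|c_k-g(x)\|_2^2-\|c_1-g(x)\|_2^2}{2\|c_k-c_1\|_2}$, which depends on $\|c_k-c_1\|_2$ as well, so a farther prototype can have a nearer bisector. For example, take $g(x)=(1,0)$, $c_1=(0,0)$, $c_2=(-0.9,0)$, $c_3=(3,0)$: then $\|c_2-g(x)\|_2=1.9<2=\|c_3-g(x)\|_2$, yet the distance to the $c_1,c_2$ bisector is $1.45$ while the distance to the $c_1,c_3$ bisector is $0.5$. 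So the formula you (and the paper) prove is the distance to the decision boundary \emph{between $c_1$ and $c_2$}; the claim that this is the minimum embedding-space distortion needed to change the prediction holds pairwise (or for $K=2$), but in the multi-class case it would require taking the minimum of this quantity over all $k\ne 1$. The paper's proof does not establish that stronger statement either, so this does not affect the agreement with the paper's derivation; just do not present the binding-facet step as proved.
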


\begin{proof}
For the convenience, redraw Figure \ref{geom} in the Figure \ref{geom2} with labeled points.

\begin{figure}[H]
\centering
\scalebox{0.8}{
\begin{minipage}{\linewidth}
\begin{picture}(260,260)
\put(130,0){\includegraphics[trim=0 0 0 0,clip,width=0.5\linewidth]{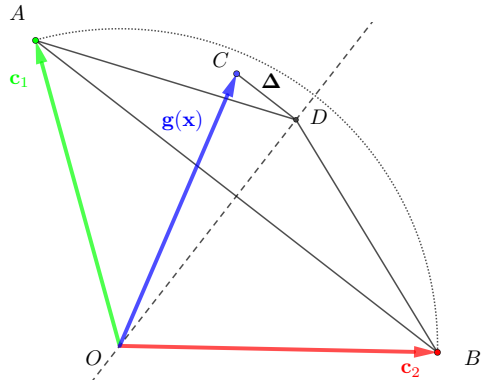}}
\put(120,140){\textcolor{green}{$\mathbf{c}_1$}}
\put(305,3){\textcolor{red}{$\mathbf{c}_2$}}
\put(192,120){\textcolor{blue}{$\mathbf{g(x)}$}}
\put(239,138){\textcolor{black}{$\mathbf{\Delta}$}}

\put(156,7){$O$}
\put(120,170){$A$}
\put(335,7){$B$}

\put(216,148){$C$}
\put(262,121){$D$}

\end{picture}
\end{minipage}
}
\caption{The direction of adversarial risk in the space of embeddings is always parallel to the vector $c_1-c_2$. This is also true for the case of $\|c_1\|_2 \ne \|c_2\|_2$.}
\label{geom2}
\end{figure}

In the Figure \ref{geom2}, $O$ is the origin, $\overrightarrow{OA}=c_1$, $\overrightarrow{OB}=c_2$, $\overrightarrow{OC}=g(x)$, $\overrightarrow{CD}=\Delta$, $\overrightarrow{AB}=c_2-c_1$.

We need to solve the following problem:
\begin{align}
    \min \|\overrightarrow{CD}\|_2 &= \|\Delta\|_2\\
    \text{s.t. }  \|\overrightarrow{BD}\|_2&\leq\|\overrightarrow{AD}\|_2 
\end{align}

It is obvious that to satisfy minimality requirement we should consider $\|\overrightarrow{BD}\|_2=\|\overrightarrow{AD}\|_2$. Thus we have $\overrightarrow{AB}$ is perpendicular to the ray $\overrightarrow{OD}$. Therefore, to minimize $\overrightarrow{CD}=\Delta$, we need to find distance from $C$ to the ray $\overrightarrow{OD}$.

The closest distance is perpendicular to $\overrightarrow{OD}$.

\begin{equation}
    \left.\begin{array}{r@{\mskip\thickmuskip}l}
    \overrightarrow{AB}\perp \overrightarrow{OD}\\
    \overrightarrow{CD}\perp \overrightarrow{OD}
  \end{array} \right\}\implies \overrightarrow{CD}\parallel\overrightarrow{AB}
\end{equation}

\begin{equation}
    \implies \Delta=\overrightarrow{CD}=\gamma\frac{\overrightarrow{AB}}{\|\overrightarrow{AB}\|_2}=\gamma\frac{c_2-c_1}{\|c_2-c_1\|_2}
\end{equation}

\begin{equation}
    \implies\overrightarrow{OD}=\overrightarrow{OC}+\overrightarrow{CD}=g(x)+\gamma\frac{c_2-c_1}{\|c_2-c_1\|_2}
\end{equation}

\begin{equation}
    \implies\left\{\begin{array}{r@{\mskip\thickmuskip}l}
    \overrightarrow{AD}=\overrightarrow{OD}-\overrightarrow{OA}=g(x)+\gamma\frac{c_2-c_1}{\|c_2-c_1\|_2}-c_1\\
    \overrightarrow{BD}=\overrightarrow{OD}-\overrightarrow{OB}=g(x)+\gamma\frac{c_2-c_1}{\|c_2-c_1\|_2}-c_2
  \end{array}\right.
\end{equation}
Solving equation  $\|\overrightarrow{BD}\|_2=\|\overrightarrow{AD}\|_2$ implies
\begin{equation} \left\|g(x)+\Delta-c_1\right\|^2_2=\left\|g(x)+\Delta-c_2\right\|_2^2.
\end{equation}
\begin{equation} 
\|g(x)-c_1\|_2^2+\|\Delta\|_2^2 +2(g(x)-c_1)^T\Delta = \|g(x)-c_2\|_2^2+\|\Delta\|_2^2 +2(g(x)-c_2)^T\Delta
\end{equation}
\begin{equation} 
\|g(x)-c_1\|_2^2+2(g(x)-c_1)^T\Delta = \|g(x)-c_2\|_2^2+2(g(x)-c_2)^T\Delta
\end{equation}
\begin{equation} 
2(g(x)-c_1)^T\Delta-2(g(x)-c_2)^T\Delta = \|g(x)-c_2\|_2^2-\|g(x)-c_1\|_2^2
\end{equation}
\begin{equation}
2(c_2-c_1)^T\Delta=\|g(x)-c_2\|_2^2-\|g(x)-c_1\|_2^2
\end{equation} 
Using the fact that $\Delta=\gamma\frac{c_2-c_1}{\|c_2-c_1\|_2}$ we find that

\begin{equation}
    \gamma = \|\Delta\|_2=\frac{\|c_2-g(x)\|^2_2 - \|c_1-g(x)\|^2_2}{2\|c_2-c_1\|^2_2}.
\end{equation}

\end{proof}

\section{Additional experiments}

In this section, we provide results of additional experiments conducted to ease the assessment of our method.

\subsection{The distribution of required sample size}

In the figure \ref{hists} we depict the histograms showing the distribution of the smallest sample size required to reach confidence level on each dataset.  All the experiments were conducted with the following parameters: the variance of additive noise $\sigma=1.0,$ maximum number of samples $n=100000$. Note that for all the experiments and for all values $\alpha$ of interest, the required number of samples is less than $10000$ for most of the input images.

\begin{figure*}[t]
\centering
\subfloat[Cub-200-2011]{\includegraphics[
width=.32\linewidth]{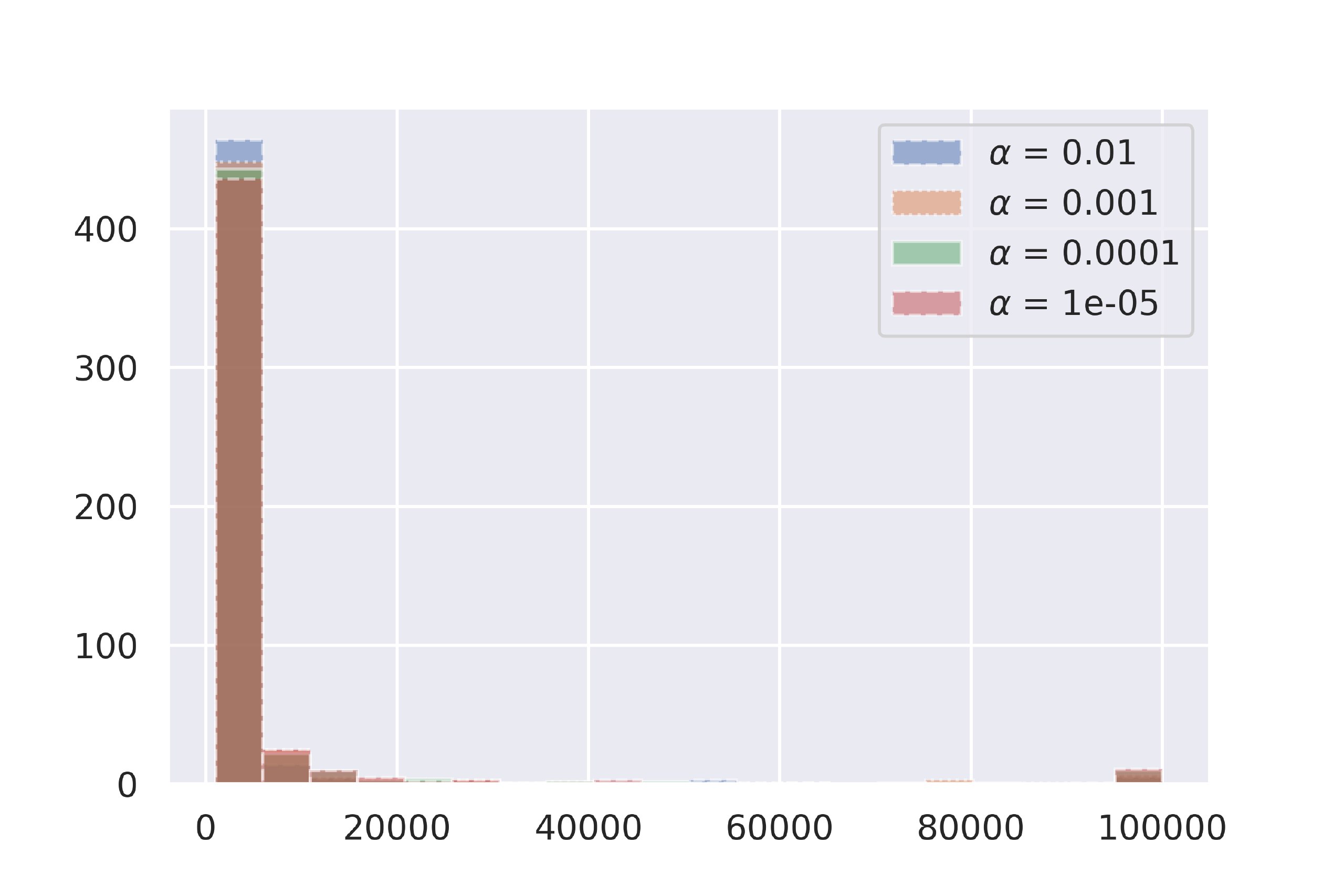}}\;
\subfloat[CIFAR-FS]{\includegraphics[width=.32\linewidth]{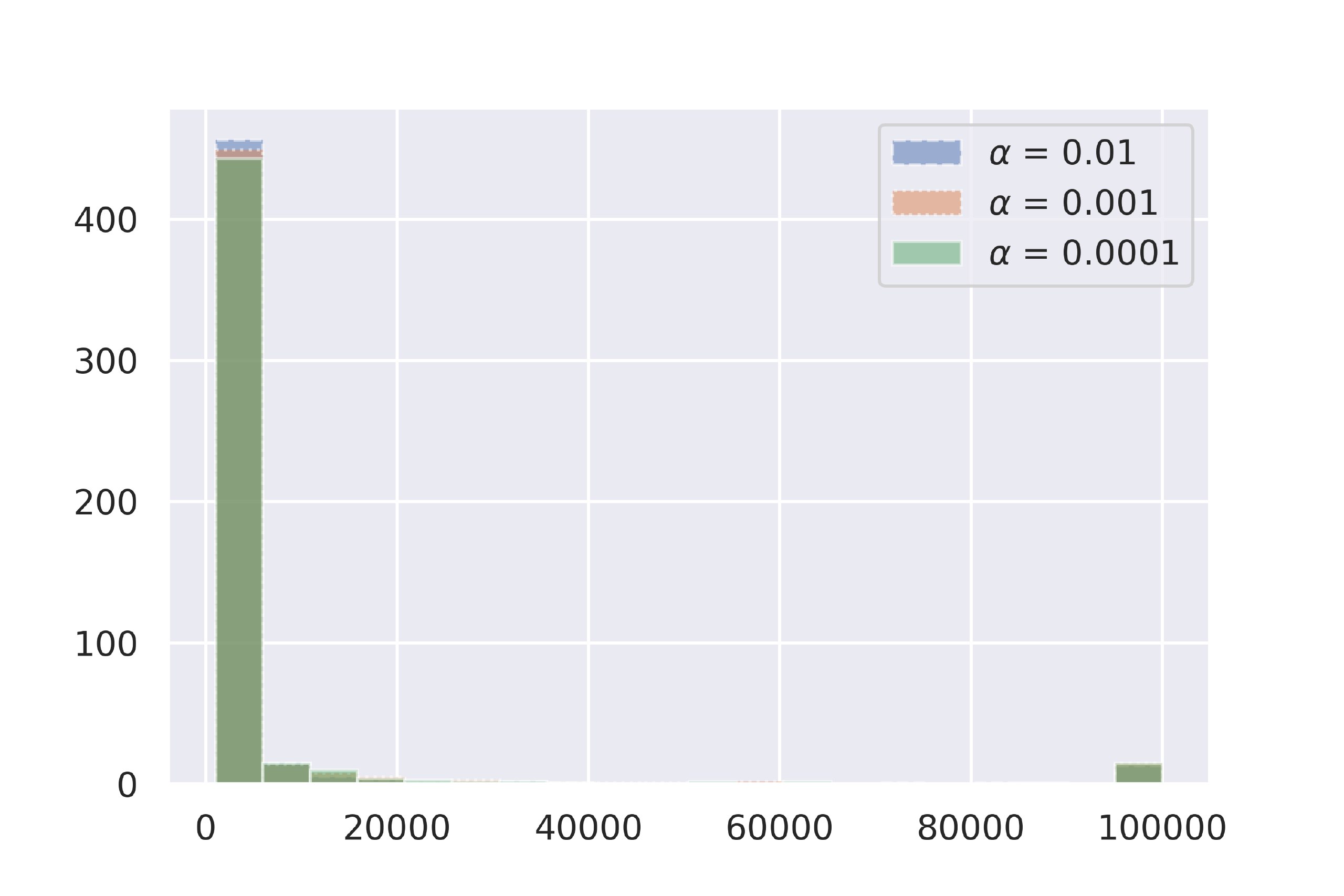}}\;
\subfloat[\textit{mini}ImageNet]{\includegraphics[width=.32\linewidth]{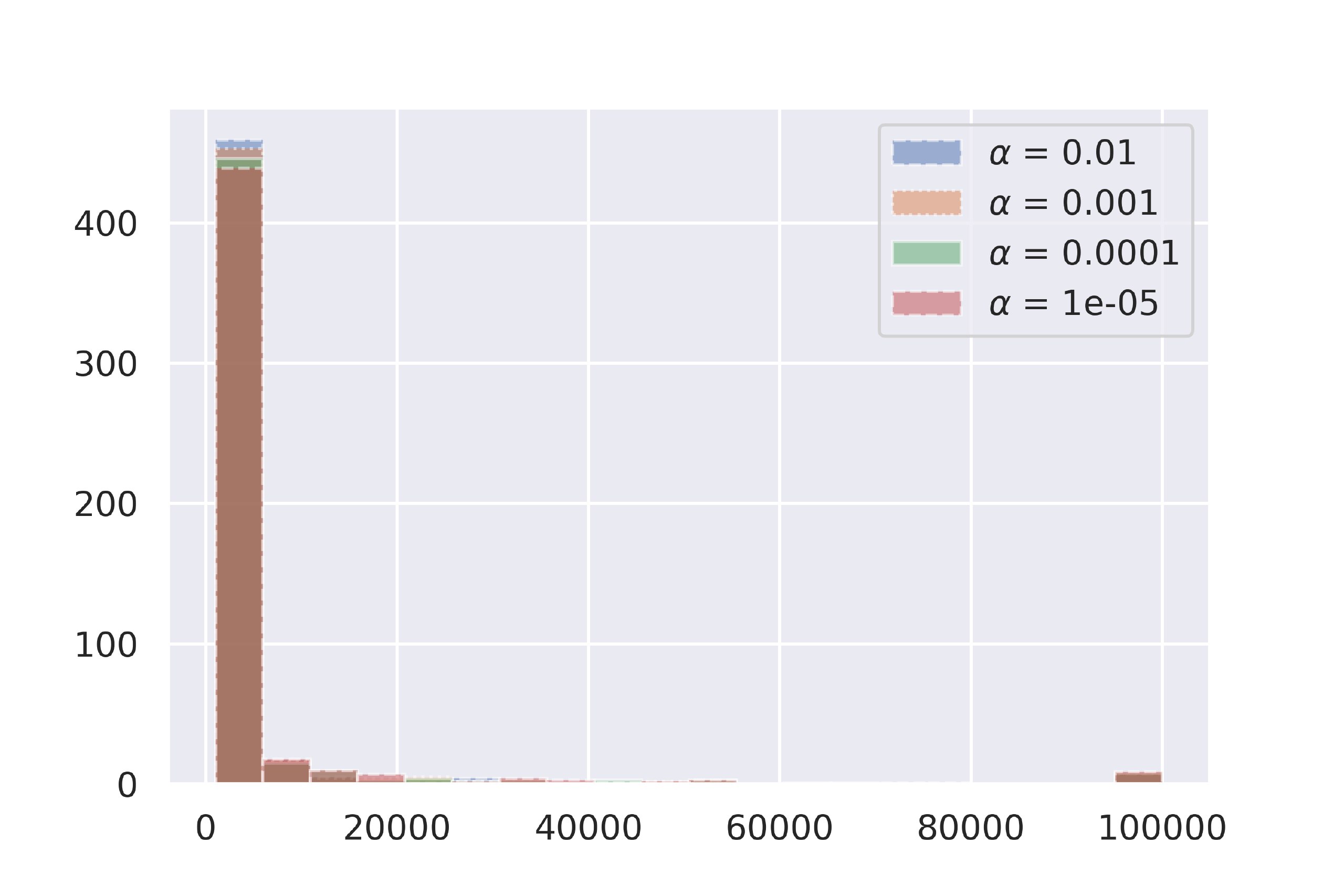}}
\caption{Histogram of the required sample sizes, $n \le 100000$, $\sigma=1.0.$}
\label{hists}
\end{figure*}

\subsection{Empirical robustness and effect of data augmentation}
In this section, we assess our approach against random attacks, against adversarial attack and evaluate the effect of the data augmentation during the baseline model training.

For each dataset, we have sampled a random subset $S$ of $500$ test images and used it as a test set. 

The random attacks were conducted on the plain baseline model $f.$ In this experiment, the input $x$ of the model were perturbed by a random noise $\delta$ of the fixed magnitude. In this case, empirical robust accuracy is computed as follows:

\begin{equation}\label{eq:plain_random_attack_ca}
CA(S, \varepsilon) = \frac{|(x,y) \in S: h(f(x))=h(f(x+\delta))=y|}{|S|}, \  \|\delta\|_2=\varepsilon.
\end{equation}

Here and below, $h(\cdot)$ corresponds to the classification rule from Section 5.3.

For the adversarial attack, the smoothed model $g$ was attacked at input point $x$ with the FGSM attack \cite{goodfellow2014explaining}. Namely, given an approximation of the smoothed model in the form $\hat{g}(x) = \frac{1}{n}\sum f(x+\varepsilon),$ the additive perturbation is computed as follows: 

\begin{equation}
    \delta(x) = sign\left(\frac{1}{n} \sum \nabla_x f(x+\varepsilon)\right), 
\end{equation}

so the additive perturbation corresponds to the projection of the mean gradient. In this case, the empirical robust accuracy looks as follows:

\begin{equation}\label{eq:smooth_adv_attack_ca}
CA(S, \varepsilon) = \frac{|(x,y) \in S: h(g(x))=h(g(x+\delta)) =y|}{|S|}, \  \|\delta\|_2=\varepsilon.
\end{equation}

Also we have evaluated the effect of the data augmentation during the training of the baseline model. For all datasets for both $1-$shot and $5-$shot settings, we have trained new baseline models without data augmentation and evaluated associated smoothed models.

In the figures \ref{fin_1shot}-\ref{fin_5shot}, we report results of the experiments. For all the smoothed models, we fixed the  confidence level $\alpha = 10^{-4}$ and number of samples $n=1000$. The variance $\sigma$ of additive noise for Cub-200-2011 and CIFAR-FS datasets is set  $\sigma=1.0,$ for \textit{mini}ImageNet dataset $\sigma=0.5.$ 

In the figures, \emph{SM (no aug)} corresponds to the certified accuracy of the smoothed model with the baseline model trained without augmentation, \emph{SM (aug)} corresponds to the certified accuracy of the smoothed model with the baseline model trained with augmentation, \emph{PM (random attack)} corresponds to empirical robust accuracy of plain baseline model under random attack and \emph{SM (adv attack)} corresponds to the empirical robust accuracy of the smoothed model under adversarial attack.

\begin{figure*}[ht]
\centering
\subfloat[Cub-200-2011]{\includegraphics[ width=.32\linewidth]{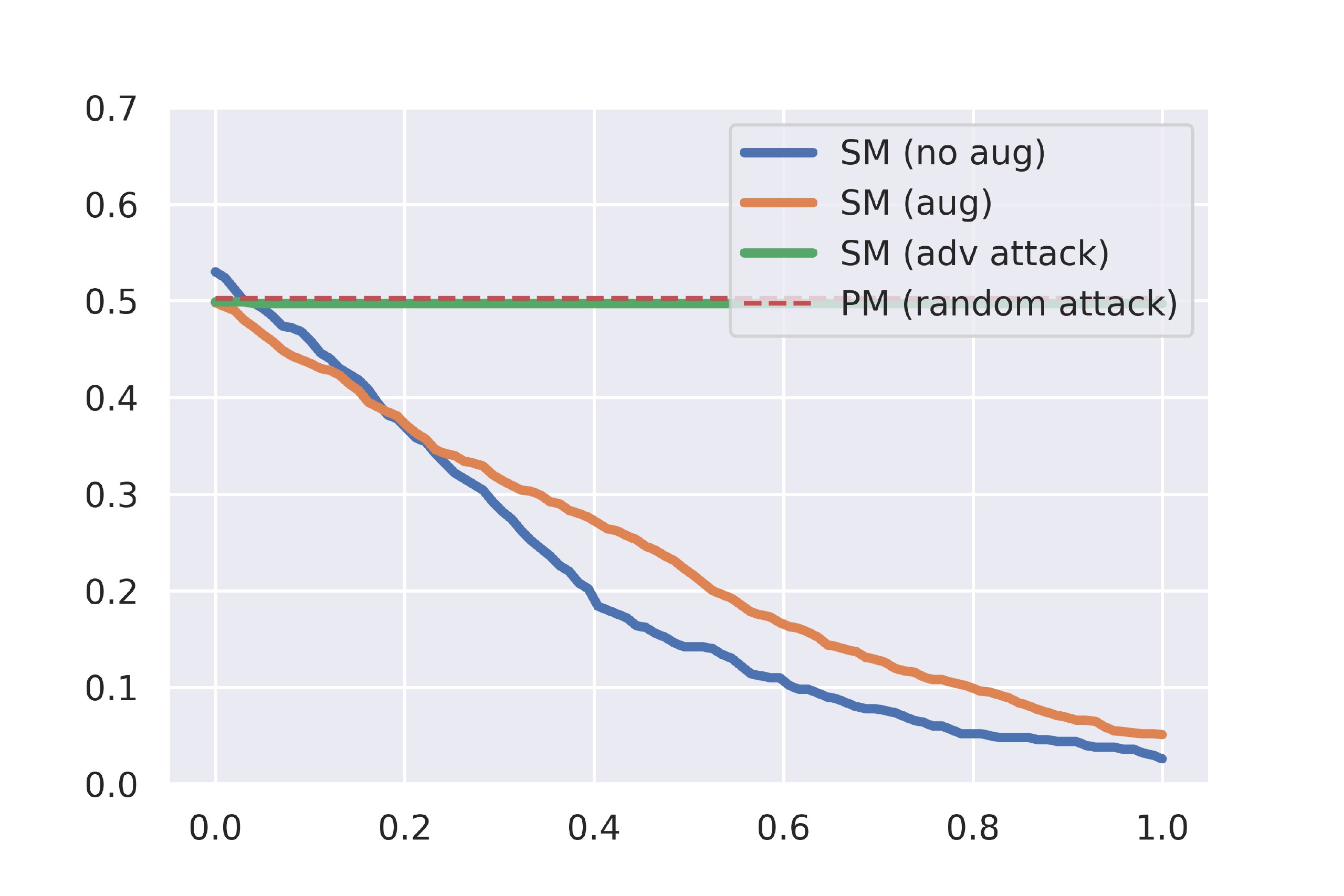}}\;
\subfloat[CIFAR-FS]{\includegraphics[width=.32\linewidth]{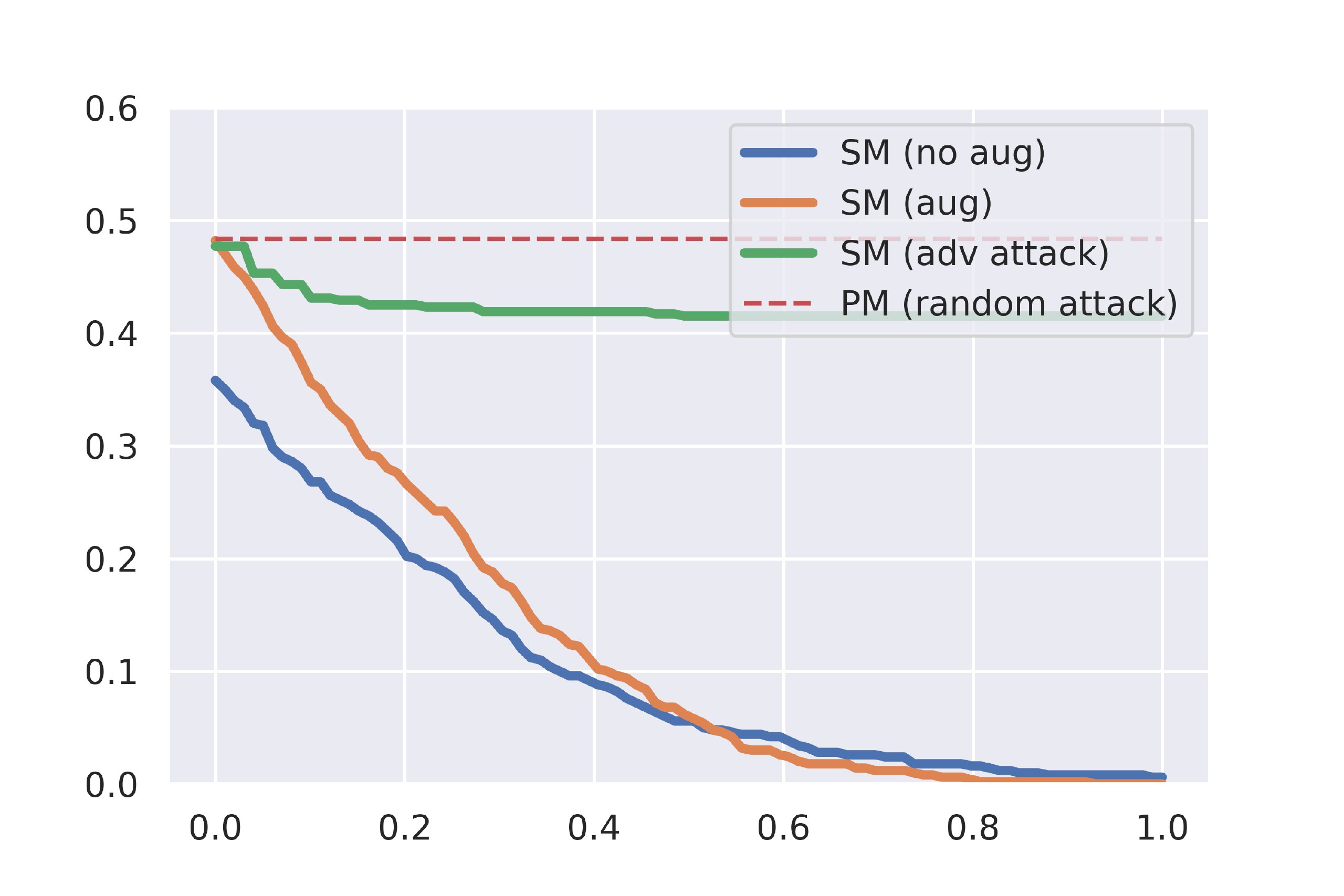}}\;
\subfloat[\textit{mini}ImageNet]{\includegraphics[width=.32\linewidth]{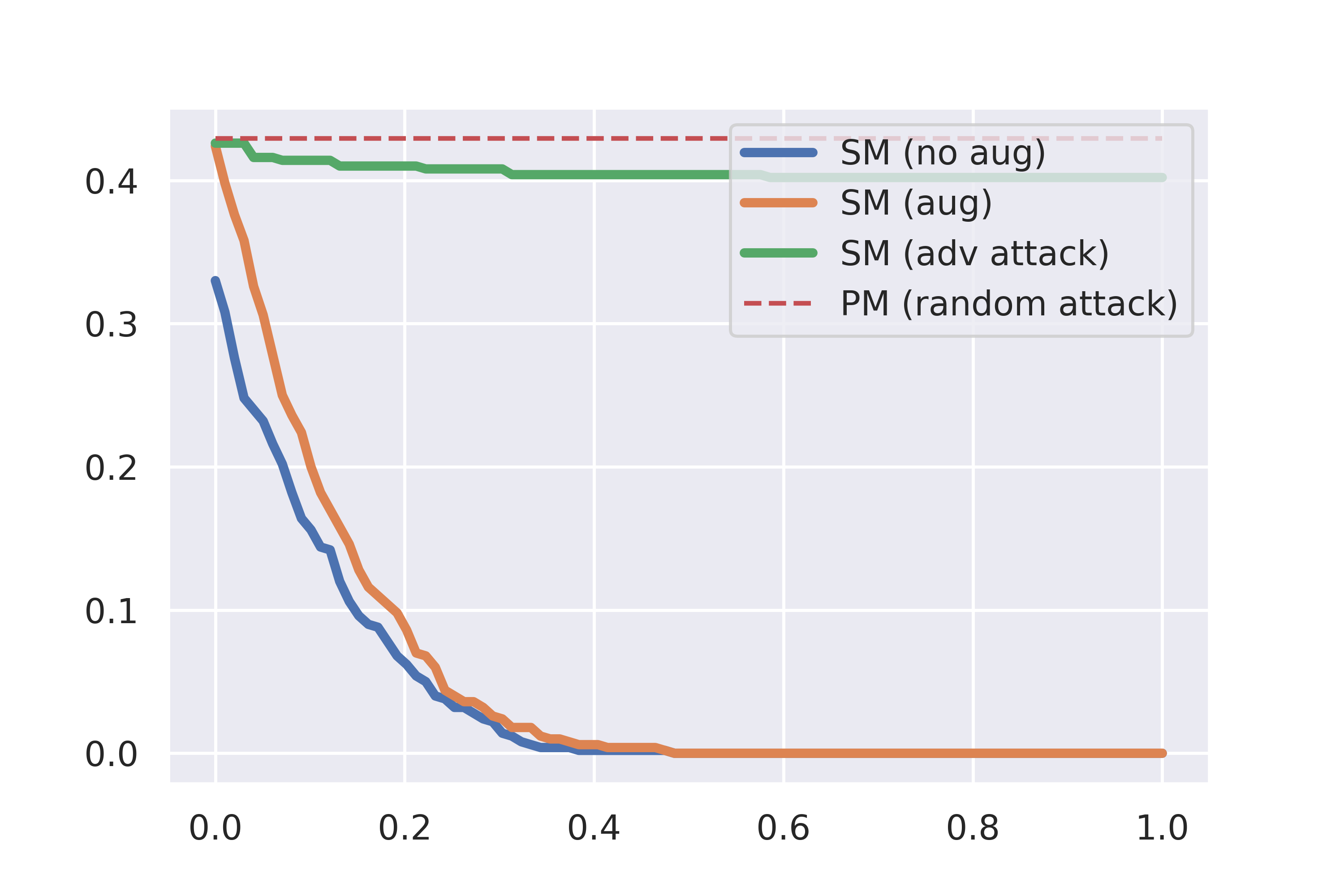}}
\caption{Certified accuracy of smoothed models, empirical robust accuracy of plain model under random attack and empirical robust accuracy of smoothed model under adversarial attack, $1-$shot case.}
\label{fin_1shot}
\end{figure*}

\begin{figure*}[ht]
\centering
\subfloat[Cub-200-2011]{\includegraphics[ width=.32\linewidth]{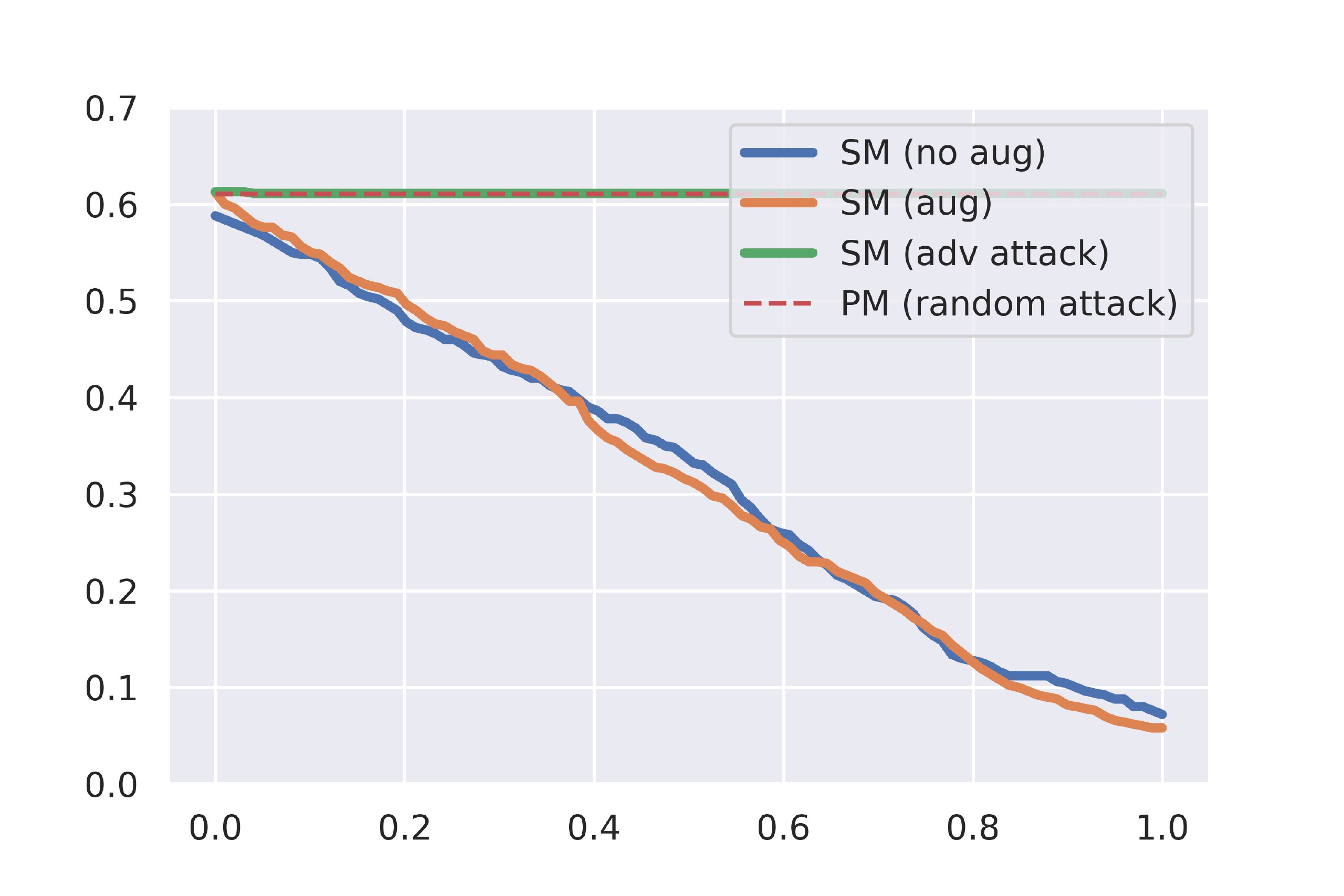}}\;
\subfloat[CIFAR-FS]{\includegraphics[width=.32\linewidth]{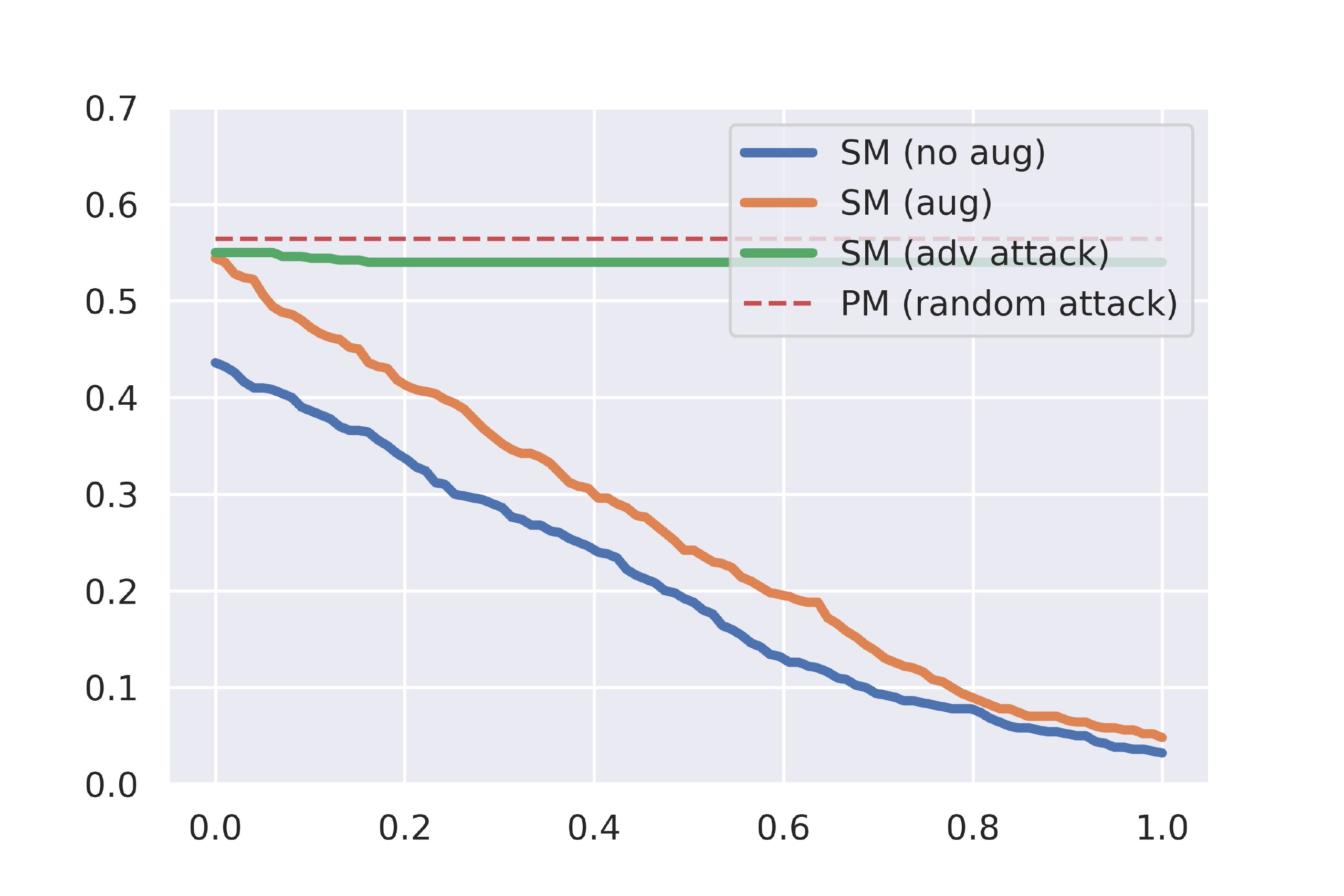}}\;
\subfloat[\textit{mini}ImageNet]{\includegraphics[width=.32\linewidth]{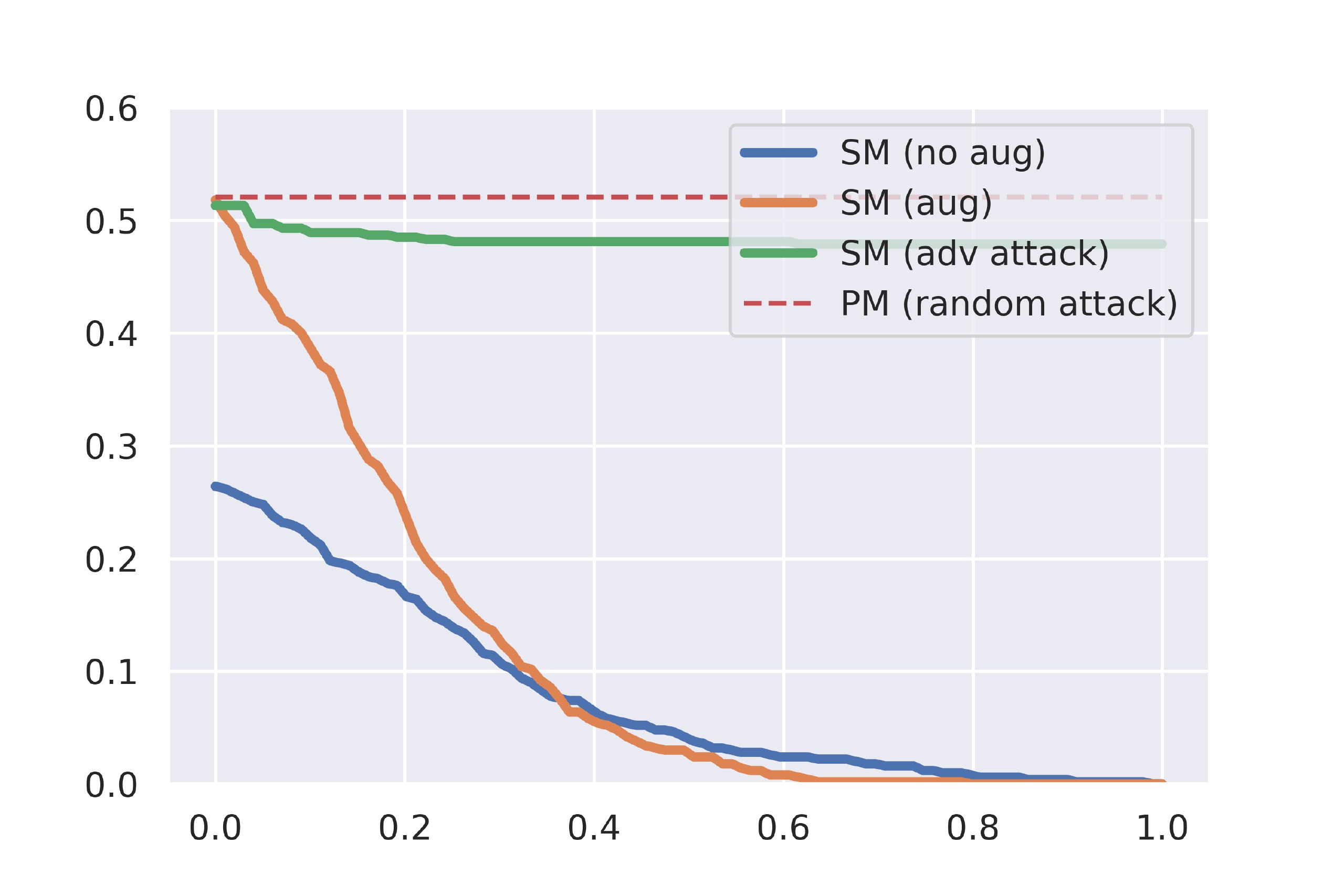}}
\caption{Certified accuracy of smoothed models, empirical robust accuracy of plain model under random attack and empirical robust accuracy of smoothed model under adversarial attack, $5-$shot case.}
\label{fin_5shot}
\end{figure*}

From the pictures, we made several observations. Firstly, the augmentation of the baseline model with an additive noise increases empirical robustness of the smoothed model, so it is natural to augment the data to have both better robustness and larger accuracy when there is no attack. Secondly, even the baseline models could not be attacked with the random additive perturbation. Although the fact that the probability of randomly generated additive noise to be adversarial is low, it is interesting to show that it holds for the prototypical networks too. Finally, we observe that even  adversarial attack is not effective against smoothed models. 

Note that our theoretical guarantees are provided for the worst-case behavior of the smoothed models, in practice smoothed encoders may have very strong empirical robustness.

\subsubsection{Adversarial attack on the smoothed model: FGSM vs PGD}
It is important to mention that one-step FGSM attack is not the best tool to assess model's robustness. In this section, we report results of additional experiments where we compare FGSM and its multi-step version, PGD \cite{madry2017towards} (projected gradient descent).

In all experiments, we run PGD attack for $s=20$ iterations. Attack norm on each iteration is $s$ times smaller than the one in FGSM experiment; it is done to compare methods of attacks on corresponding magnitudes. 
For all the smoothed models, we fixed the
 confidence level $\alpha=10^{-4}$ and number of samples $n = 1000.$ The variance $\sigma$ of additive noise for 
Cub-200-2011 and CIFAR-FS datasets is set $\sigma = 1.0$, for \textit{mini}ImageNet dataset $\sigma = 0.5.$

All the models were evaluated in $1-$shot setting. Results are reported in the figure \ref{fgsm_pgd}.

\begin{figure*}[ht]
\centering
\subfloat[Cub-200-2011]{\includegraphics[ width=.32\linewidth]{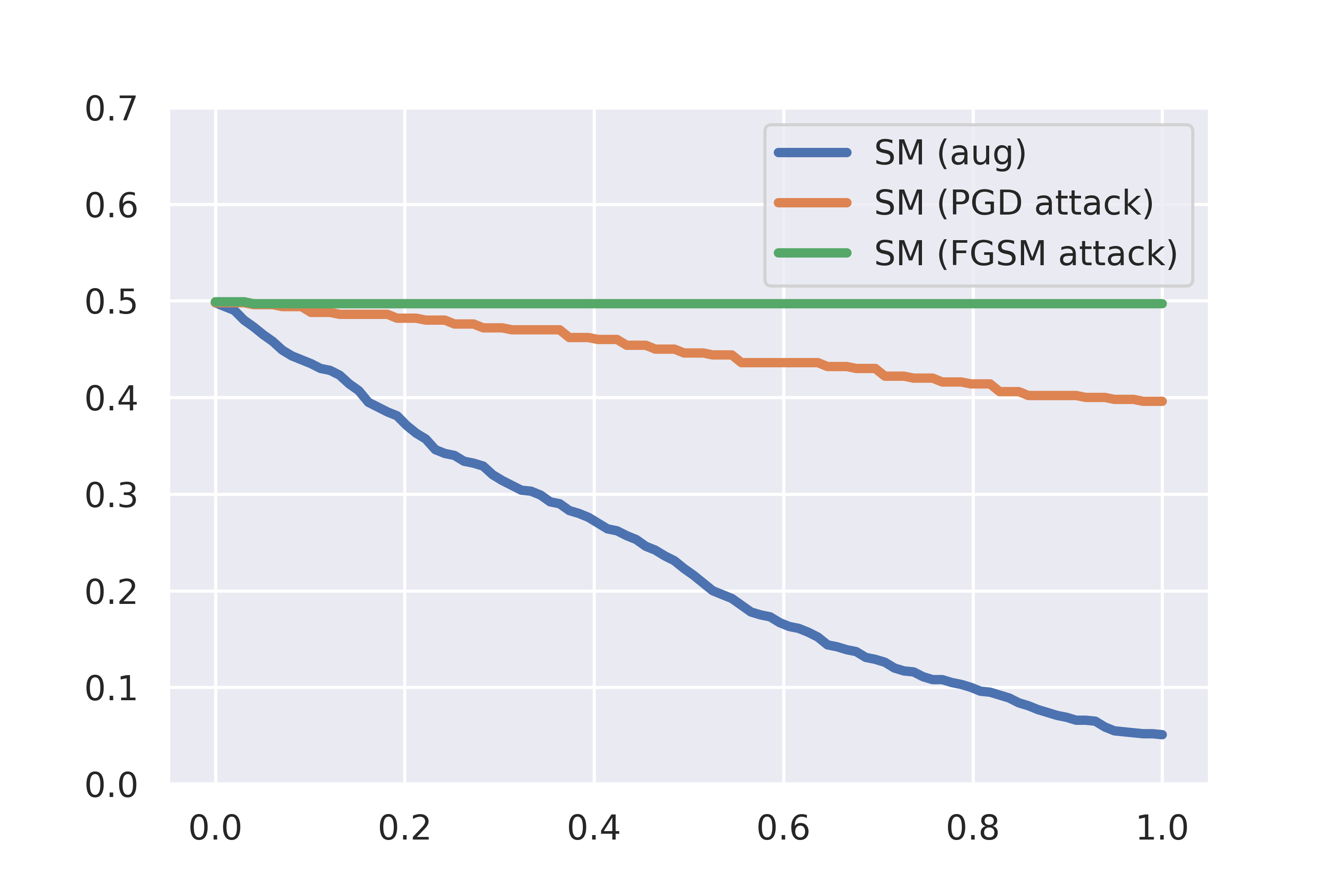}}\;
\subfloat[CIFAR-FS]{\includegraphics[width=.32\linewidth]{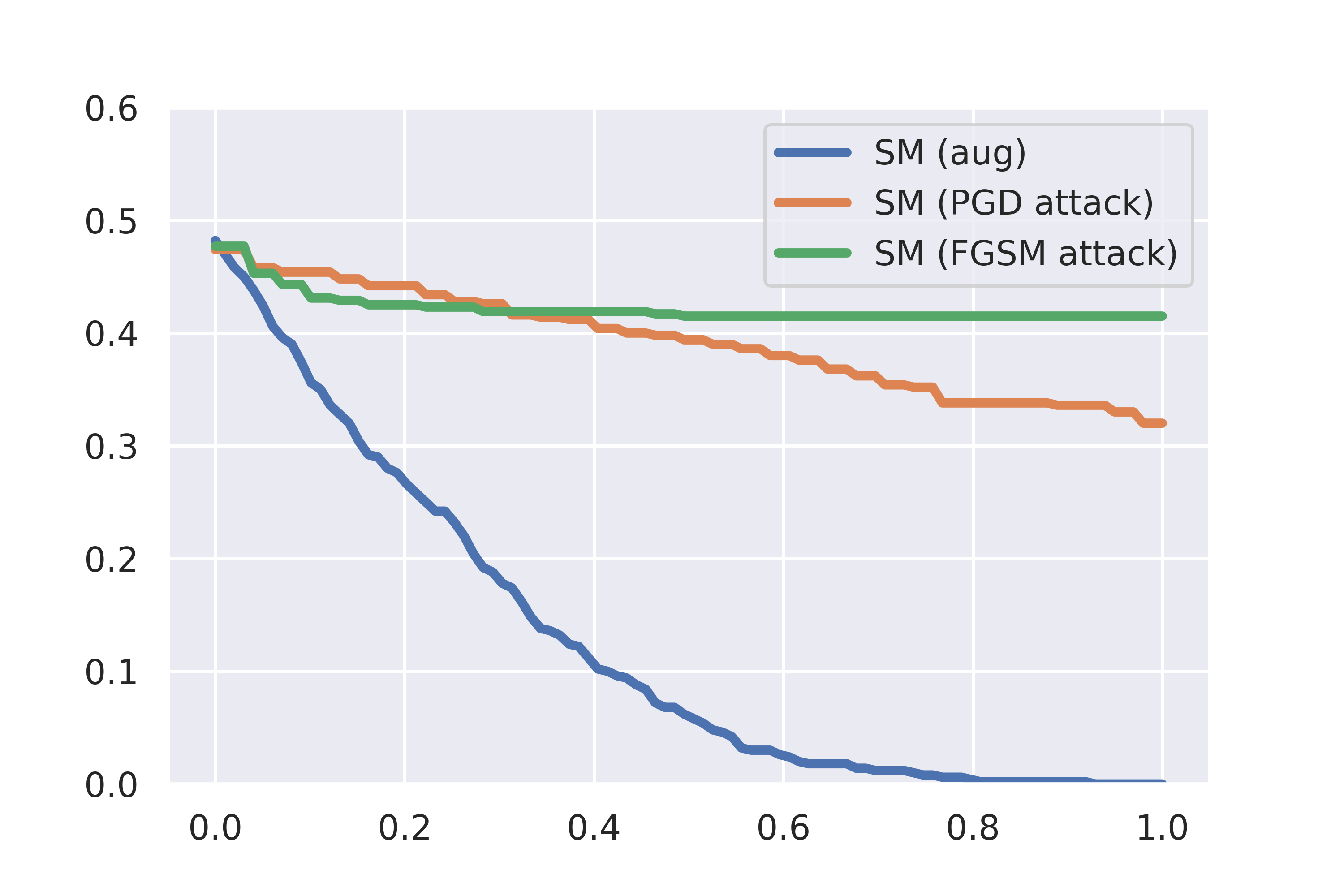}}\;
\subfloat[\textit{mini}ImageNet]{\includegraphics[width=.32\linewidth]{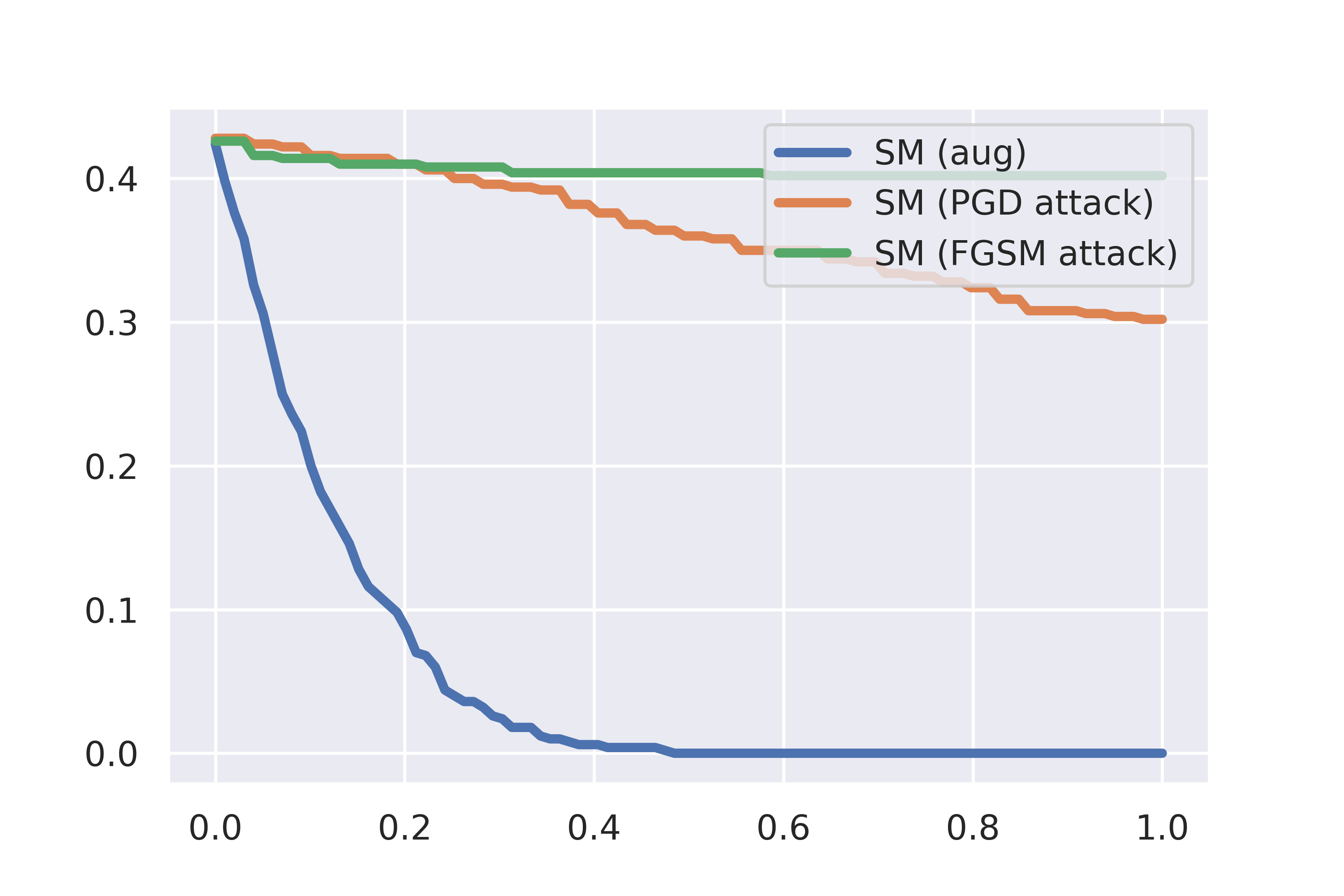}}
\caption{Certified accuracy of smoothed models, empirical robust accuracy of smoothed models under FGSM and PGD attacks,  $1-$shot case.}
\label{fgsm_pgd}
\end{figure*}

It is notable that multi-step attack is more effective against smoothed model, especially on larger norms of perturbations. Still, the model performs well even against PGD attack.

\end{document}